\def\ps@pprintTitle{%
 \let\@oddhead\@empty
 \let\@evenhead\@empty
 \def\@oddfoot{}%
 \let\@evenfoot\@oddfoot}
\newtheoremstyle{mystyle}
  {}
  {}
  {\itshape}
  {}
  {\bfseries}
  {.}
  { }
  {\thmname{#1}\thmnumber{ #2}\thmnote{ (#3)}}
\theoremstyle{mystyle}
\newtheorem{theorem}{Theorem}[section]
\newtheorem{cor}{Corollary}
\newtheorem{lem}{Lemma}
\newtheorem{definition}{Definition}[section]
\renewcommand{\epsilon}{{\varepsilon}}
\DeclareMathOperator*{\argmin}{argmin}
\newcommand{\xMapsto}[2][]{\ext@arrow 0599{\Mapstofill@}{#1}{#2}}
\def\Mapstofill@{\arrowfill@{\Mapstochar\Relbar}\Relbar\Rightarrow}
\title{New Approaches to Inverse Structural Modification Theory using Random Projections}
\begin{document}

\begin{frontmatter}

\author{Prasad Cheema$^1$, Mehrisadat M. Alamdari$^2$, Gareth A. Vio$^1$}

\address{$^1$ School of AMME, The University of Sydney, NSW 2006, Australia}
\address{$^2$ School of Civil and Environmental Engineering, University of New South Wales, Sydney, NSW 2052, Australia}
\pagenumbering{gobble}

\begin{abstract}
In many contexts the modal properties of a structure change, either due to the impact of a changing environment, fatigue, or due to the presence of structural damage. For example during flight, an aircraft’s modal properties are known to change with both altitude and velocity. It is thus important to quantify these changes given only a truncated set of modal data, which is usually the case experimentally. This procedure is formally known as the generalised inverse eigenvalue problem. In this paper we experimentally show that first-order gradient-based methods that optimise objective functions defined over a modal are prohibitive due to the required small step sizes. This in turn leads to the justification of using a non-gradient, black box optimiser in the form of particle swarm optimisation. We further show how it is possible to solve such inverse eigenvalue problems in a lower dimensional space by the use of random projections, which in many cases reduces the total dimensionality of the optimisation problem by 80\% to 99\%. Two example problems are explored involving a ten-dimensional mass-stiffness toy problem, and a one-dimensional finite element mass-stiffness approximation for a Boeing 737-300 aircraft.
\end{abstract}

\begin{keyword}
Inverse Eigenvalue Problems \sep Modal Analysis \sep Random Projections \sep Particle Swarm Optimisation \sep Finite Element Analysis


\end{keyword}

\end{frontmatter}

\section{Introduction}
\label{S:1}

Eigenvalue problems are common in the engineering context \cite{bathe1973solution,elishakoff1991some}. As such, they have been used in a plethora of applications such as in analysing the state matrix of an electronic power system \cite{lima2000assessing}, in studying the  aeroelastic instability for wind turbines \cite{hansen2007aeroelastic}, for determining the spectral radius of Jacobian matrices \cite{day1984run}, and in the operational modal analysis of a structures \cite{sun2017automated}. The most common eigenvalue problem, known as the the \textit{direct} or \textit{forward} problem involves determining the impact of a known set of modifications to a group of matrices, either by computing the eigenvalues, eigenvectors, singular values, or singular vectors of the group of matrices. The direct problem is well studied, and is the subject of many elementary courses in linear algebra, but the \textit{inverse} problem is much more complex. 

The inverse problem tries to find or infer a particular type of modification which was applied to a set of matrices, from a larger set of possible modifications, using mainly spectral information \cite{chu2005inverse}. It is clear that this problem would be trivial if all the spectral information of the  system before and after any modifications were known (that is, we are not dealing with a truncated modal system), or if the desired modifications were completely unstructured (that is, they are allowed to be any value). Thus in order to strive for more physical, and mathematical solutions we often try to restrict the group of possible matrices for the inverse eigenvalue problem. In a recent review article, Chu \cite{chu1998inverse} devised a collection of thirty-nine possible inverse eigenvalue problems. These problems were roughly categorized based on their: paramterisation, underlying structure, and the partiality of the system description (that is, whether or not we have complete modal information). A summary of the most common kinds of inverse eigenvalue problems are given in Figure \ref{fig:inverse_eig_value_probs}, where the following terminology is used:

\begin{itemize}
    \item MVIEP: Multivariate inverse eigenvalue problem
    \item LSIEP: Least square inverse eigenvalue problem
    \item PIEP: Parameterised inverse eigenvalue problem
    \item SIEP: Structured inverse eigenvalue problem
    \item PDIEP: Partially described inverse eigenvalue problem
    \item AIEP: Additive inverse eigenvalue problem
    \item MIEP: Multiplicative inverse eigenvalue problem
\end{itemize}

In this paper we aim to explore AIEPs which have a highly general parameterisation, in order to demonstrate the potential capabilities of random projections for the inverse eigenvalue problem. 

\begin{figure}[ht!]
\centering
  \includegraphics[width=0.45\linewidth]{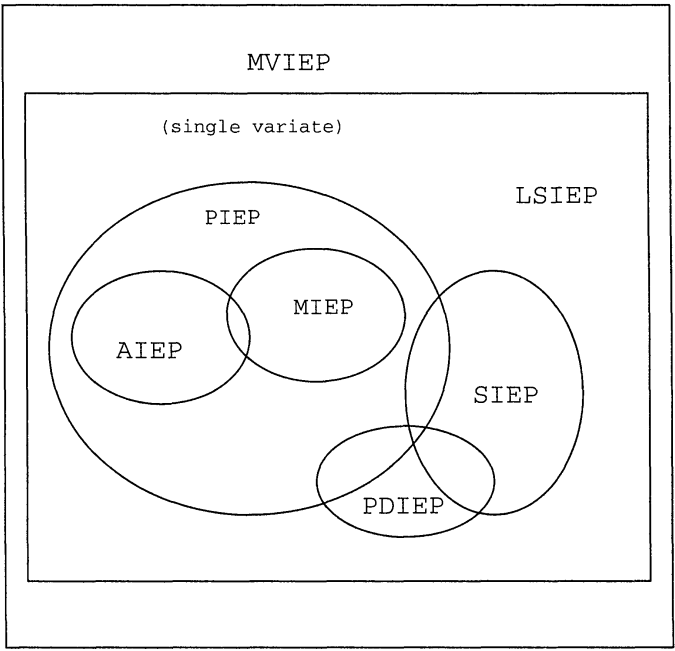}
  \caption{An overview of some of the general classes of inerse eigenvalue problems as defined by Chu \cite{chu1998inverse}.}
  \label{fig:inverse_eig_value_probs}
\end{figure}

In order to solve for the AIEP we shall define an optimisation problem. Although there are many optimisation procedures available for solving such problems, a particle swarm optimiser (PSO) is used in this paper. PSO is an optimisation procedure first introduced by Eberhart and Kennedy in 1995 \cite{eberhart1995new}. It is a stochastic, population-based optimisation procedure modeled on the observed behaviour of animals which exhibit swarm-like tendencies, such in the social behaviour of birds or insects. Because of this PSO, tries to mimic swarm-like behaviour with each \textit{particle} having access to both: a personal best solution, \textit{and} access to the global optimum, thereby enabling the sharing of information across the swarm. This introduces the idea of the classic \textit{exploration-exploitation} trade-off since the particles are either allowed to \textit{exploit} the current global optimum, or \textit{explore} further if their local optimum is far away from the current global \cite{eberhart1995new,van2010convergence}. 

PSO is used in this paper to perform the optimisation for two main reasons. Firstly, to the knowledge of the authors, it has not previously been used in the context of this problem (AIEP), hence there is novelty in doing so. Secondly, it is a known black-box, gradient-free optimiser which makes it simple to work with since there is no requirement to compute the Jacobian, or calculate analytical gradients. \cite{van2010convergence} 
Moreover as we shall demonstrate experimentally, gradient step sizes are required to be very small in the inverse eigenvalue problem, with the issue exacerbating in higher dimensions. PSO as an algorithm has been successfully used in many different areas, including but not limited to reactive power and voltage control problems \cite{yoshida2000particle}, in the study of material degradation for aeroelastic composites \cite{vishwanathan2017multi}, composites structures with robustness \cite{vishwanathan2017robust}, and in the optimum design of Proportional-Integral-Derivative (PID) control \cite{gaing2004particle} which additionally helps in justifying its potential to work well in this context. 

Regardless, of the choice of optimiser, all optimisation procedures are known to suffer from the \textit{curse of dimensionality}, and since structural problems are generally able to grow without bound in terms of degrees of freedom (for example, a finite element model can keep growing in the number of elements), it is important to devise methods which can help reduce, or at least limit the rate of growth of these dimensions. Random projections have recently emerged as a powerful method to address the problem of dimensionality reduction \cite{bingham2001random}. This is because theory (in particular the Johnson–Lindenstrauss lemma) suggests that certain classes of random matrices are able to preserve Euclidean distances to within a tolerable error bound in the lower dimensional space \cite{johnson1984extensions}. As a result, random projections have been used successfully to reduce the dimensionality of the underlying optimisation problem, consequently allowing for optimisation to be performed in this lower dimensional space \cite{wang2016bayesian,gardner2014bayesian,krummenacher2014radagrad}. 

Thus, it is ultimately the aim of this paper to explore the impact of random projections and how they may be used in connection with the PSO algorithm for the generalised inverse eigenvalue problem of an additive nature. In the following sections we demonstrate experimentally that gradient-based approaches lose accuracy in even moderate step sizes, and clarify the theory that we shall use from random projections to help in lowering the dimensionality of the underlying optimisation problem. Lastly we showcase a gamut of positive results on a 10 dimensional (meaning a matrix of size 10) toy problem, and 1 dimensional finite element model based on aircraft data for the Boeing 737-300 aircraft.

\section{Background and Methodology}

\subsection{The Generalised Eigenvalue Problem}

Ultimately it is the aim of this paper to use particle swarm optimisation (PSO) in order to try and solve the generalised \textit{inverse} eigenvalue problem, with the use of random projections. We thus commence by formalising the notion of the generalised eigenvalue problem here. 

Suppose we have the generalised eigenvalue problem as shown in Equation \ref{eqn:gen_eig_prob}, which represents an undamped mechanical vibration system.

\begin{equation} \label{eqn:gen_eig_prob}
    \mathbf{M}\Ddot{\mathbf{x}} + \mathbf{Kx} = 0
\end{equation}

where  $\mathbf{M},\mathbf{K} \in \mathbb{R}^{N \times N}$, and $\mathbf{x}\in\mathbb{R}^N$. The eigensystem for Equation \ref{eqn:gen_eig_prob} defines the following set of eigenvalue, eigenvector pairs: $\mathbb{A} = \{(\lambda_i,\mathbf{v}_i)|i = 1,..,N; \mathbf{v}_i \in \mathbb{R}^N, \lambda \in \mathbb{R}\}$. Furthermore we assume that the eigenvalue problem is perturbed via the addition of some arbitrary matrices we denote as $\mathbf{\Delta}\in (\mathbb{R}^{N\times N}, \mathbb{R}^{N\times N}$). That is, from here onwards whenever the $\mathbf{\Delta}$ symbol is written in isolation, in a bold-type font it denotes a 2-tuple of perturbation matrices of the system, that is, $\mathbf{\Delta} \coloneqq (\Delta \mathbf{M}, \Delta \mathbf{K})$, where $\Delta \mathbf{M} \in \mathbb{R}^{N\times N}$ and $\Delta \mathbf{K} \in \mathbb{R}^{N\times N}$.

\begin{equation} \label{eqn:mod_eig_prob}
( \mathbf{M} + \Delta \mathbf{M})\Ddot{\mathbf{x}} + (\mathbf{K}+\Delta \mathbf{K})\mathbf{x} = 0    
\end{equation}

The eigenpairs for the modified system shown in Equation \ref{eqn:mod_eig_prob} may be represented via the following set of eigenpairs: $\mathbb{B} = \{(\sigma_i,\mathbf{w}_i)|i = 1,..,N; \mathbf{w}_i \in \mathbb{R^N}, \sigma \in \mathbb{R}\}$. However, if both the initial and modified systems are full rank systems, then it would be trivial to obtain the $\Delta \mathbf{M}$ and $\Delta \mathbf{K}$ matrices. Thus for this paper we assume that we only have access to a truncated eigensystem for the modified system. That is, we only have access to some subset of the pairs: $\mathbb{C} \subset \mathbb{B}$, where $|\mathbb{C}| = n < N$. Thus our objective function in the search for the optimal $\mathbf{\Delta}$ matrices reflects this, in Equation \ref{eqn:obj_fun_init}.

\begin{equation} \label{eqn:obj_fun_init}
\Delta \mathbf{M}^{\star}, \Delta \mathbf{K}^{\star} = \argmin(||\bm{\sigma}^{\dagger}_{1:n} - \bm{\sigma}_{1:n}(\bm{\Delta})||_2^2),
\end{equation}

where $||\cdot ||_2^2 $ denotes the square of the standard 2-norm,  $\bm{\sigma}^{\dagger}$ denotes the desired eigenvalues, and $\bm{\sigma}$ refers to the eigenvalues as calculated from applying the $\bm{\Delta}$ matrices (clarified in the prior paragraph in reference to set $\mathbb{B}$). As is made clear in Equation \ref{eqn:obj_fun_init}, we only consider the first $n < N$ dimensions, since we are dealing with a reduced set of eigenvalues. 

In this paper we propose investigating the solutions for the objective function shown in Equation \ref{eqn:obj_fun_init} via PSO. We aim to use a non-gradient based, black-box optimisation since a first order perturbation analysis of the modified eigenvalues seem to suggest that for higher dimensional problems the step-sized used by gradient-based approaches may become prohibitively small. We establish this idea by first developing Lemma \ref{lem:gen_perturb} as follows. 

\begin{lem} \label{lem:gen_perturb}
Suppose we have the two following generalised eigenvalue problems,
\begin{align}
    \lambda \mathbf{M} \mathbf{v} &= \mathbf{K} \mathbf{v} \label{eqn:not_pertubed}\\
    (\lambda+\delta \lambda) (\mathbf{M} +\Delta \mathbf{M}) (\mathbf{v}+\delta \mathbf{v}) &= (\mathbf{K} + \Delta \mathbf{K}) (\mathbf{v} + \delta \mathbf{v}) \label{eqn:pertubed},
\end{align}
where $\bm{\Delta}$ perturbations are controlled system inputs, and the $\delta$ perturbations are a consequence of applying $\bm{\Delta}$. Then,
\begin{align} \label{eqn:pert_gen}
    \delta \lambda_i = \frac{\mathbf{v}_i^{\intercal}\left(\Delta \mathbf{K} - \lambda_i \Delta \mathbf{M}\right)\mathbf{v}_i}{\mathbf{v}_i^{\intercal}\Delta \mathbf{M} \mathbf{v}_i},
\end{align}
if $\mathbf{M}$ and $\mathbf{K}$ are Hermitian.
\end{lem}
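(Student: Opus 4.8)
The plan is to treat \eqref{eqn:pertubed} as a regular first-order perturbation of \eqref{eqn:not_pertubed}, regarding $\Delta\mathbf{M}$, $\Delta\mathbf{K}$, $\delta\lambda_i$ and $\delta\mathbf{v}_i$ as small quantities of the same order. First I would expand both sides of \eqref{eqn:pertubed} in full, multiplying out the three factors $(\lambda_i+\delta\lambda_i)$, $(\mathbf{M}+\Delta\mathbf{M})$ and $(\mathbf{v}_i+\delta\mathbf{v}_i)$. The zeroth-order terms reproduce $\lambda_i\mathbf{M}\mathbf{v}_i=\mathbf{K}\mathbf{v}_i$ and cancel by \eqref{eqn:not_pertubed}, while every term carrying a product of two or more perturbations (such as $\lambda_i\,\Delta\mathbf{M}\,\delta\mathbf{v}_i$ or $\delta\lambda_i\,\Delta\mathbf{M}\,\mathbf{v}_i$) is dropped as second order. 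What survives is the linear balance
\[
\lambda_i\mathbf{M}\,\delta\mathbf{v}_i + \lambda_i\,\Delta\mathbf{M}\,\mathbf{v}_i + \delta\lambda_i\,\mathbf{M}\,\mathbf{v}_i = \mathbf{K}\,\delta\mathbf{v}_i + \Delta\mathbf{K}\,\mathbf{v}_i .
\]

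Next I would group the unknown eigenvector perturbation, writing this as $(\lambda_i\mathbf{M}-\mathbf{K})\,\delta\mathbf{v}_i = \Delta\mathbf{K}\,\mathbf{v}_i - \lambda_i\,\Delta\mathbf{M}\,\mathbf{v}_i - \delta\lambda_i\,\mathbf{M}\,\mathbf{v}_i$, and then left-multiply by $\mathbf{v}_i^{\intercal}$. This projection is where the Hermitian hypothesis does the essential work, and is the step I expect to be the crux: because $\mathbf{M}$ and $\mathbf{K}$ are Hermitian, $\mathbf{v}_i$ is simultaneously a left eigenvector, so $\mathbf{v}_i^{\intercal}(\lambda_i\mathbf{M}-\mathbf{K}) = \bigl((\lambda_i\mathbf{M}-\mathbf{K})\mathbf{v}_i\bigr)^{\intercal} = \mathbf{0}$. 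The troublesome $\delta\mathbf{v}_i$ contribution, for which we otherwise have no independent handle, is thereby annihilated and never needs to be solved for. (If the matrices are genuinely complex Hermitian rather than real symmetric, one must project with the conjugate transpose $\mathbf{v}_i^{*}$ and use that $\lambda_i$ is real; I would check this detail, since the statement is written with $\intercal$.)

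After the projection the $\delta\mathbf{v}_i$ term vanishes and I am left with the scalar identity $\delta\lambda_i\,(\mathbf{v}_i^{\intercal}\mathbf{M}\mathbf{v}_i) = \mathbf{v}_i^{\intercal}(\Delta\mathbf{K}-\lambda_i\Delta\mathbf{M})\mathbf{v}_i$, from which $\delta\lambda_i$ follows by one division. The numerator already coincides with \eqref{eqn:pert_gen}. The one point I would pin down is the denominator: the clean first-order truncation above produces $\mathbf{v}_i^{\intercal}\mathbf{M}\mathbf{v}_i$ (which is invisible under the usual mass-normalisation $\mathbf{v}_i^{\intercal}\mathbf{M}\mathbf{v}_i=1$), whereas \eqref{eqn:pert_gen} displays $\mathbf{v}_i^{\intercal}\Delta\mathbf{M}\mathbf{v}_i$. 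The two reconcile only through a specific bookkeeping choice for the coefficient of $\delta\lambda_i$ — for instance retaining $\delta\lambda_i\,\Delta\mathbf{M}\,\mathbf{v}_i$ in place of the first-order $\delta\lambda_i\,\mathbf{M}\,\mathbf{v}_i$ — so I would settle that truncation/normalisation convention before regarding the lemma as fully established.
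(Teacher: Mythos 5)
Your proof is correct, and it reaches the paper's intermediate equations by a mildly different mechanism. The paper linearises Equation~\ref{eqn:pertubed} to exactly the balance you wrote (its Equation~\ref{eqn:proof_one}), but then expands the unknown $\delta\mathbf{v}_i$ in the $\mathbf{M}$-orthogonal eigenbasis, $\delta\mathbf{v}_i=\sum_{k}c_k\mathbf{v}_k$, substitutes, projects with $\mathbf{v}_i^{\intercal}$, and relies on $\mathbf{M}$-orthogonality to cancel the $c_i$ terms. You instead annihilate $\delta\mathbf{v}_i$ in one stroke via $\mathbf{v}_i^{\intercal}(\lambda_i\mathbf{M}-\mathbf{K})=\bigl((\lambda_i\mathbf{M}-\mathbf{K})\mathbf{v}_i\bigr)^{\intercal}=\mathbf{0}$. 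These are the same symmetry fact; your version is leaner because it never invokes completeness/diagonalisability of the eigenvectors, which the paper must assume in order to write the basis expansion. Your parenthetical caveat about genuinely complex Hermitian matrices (project with the conjugate transpose, use realness of $\lambda_i$) applies equally to the paper, which also writes $\intercal$ throughout.

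The denominator issue you flag is not an unsettled convention on your side; it is an error in the printed lemma, and your form is the correct one. Projecting the linearised equation with $\mathbf{v}_i^{\intercal}$ gives
\begin{align*}
\mathbf{v}_i^{\intercal}\Delta\mathbf{K}\,\mathbf{v}_i \;=\; \delta\lambda_i\,\mathbf{v}_i^{\intercal}\mathbf{M}\mathbf{v}_i \;+\; \lambda_i\,\mathbf{v}_i^{\intercal}\Delta\mathbf{M}\,\mathbf{v}_i,
\end{align*}
so the denominator must be $\mathbf{v}_i^{\intercal}\mathbf{M}\mathbf{v}_i$; this is also what the paper's own Equation~\ref{eqn:proof_two} yields after projection, so the final line of its proof is inconsistent with its own derivation. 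No bookkeeping choice can rescue Equation~\ref{eqn:pert_gen} as printed: $\delta\lambda_i\,\Delta\mathbf{M}\,\mathbf{v}_i$ is a product of two perturbations while $\delta\lambda_i\,\mathbf{M}\,\mathbf{v}_i$ is a product of one, so any consistent truncation retaining the former must also retain the latter (giving at best a denominator $\mathbf{v}_i^{\intercal}(\mathbf{M}+\Delta\mathbf{M})\mathbf{v}_i$, still not the printed form). Two sanity checks confirm your version: with $\Delta\mathbf{M}=\mathbf{0}$ (a pure stiffness modification) the printed formula divides by zero while yours reduces to the standard result; and Corollary~\ref{cor:cor_perturb} itself displays the denominator $\mathbf{v}_i^{\intercal}\mathbf{v}_i=\mathbf{v}_i^{\intercal}\mathbf{I}\,\mathbf{v}_i$, i.e.\ the mass-matrix form, not the $\Delta$-form. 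So regard your derivation, with $\mathbf{v}_i^{\intercal}\mathbf{M}\mathbf{v}_i$ in the denominator, as the established statement; the lemma and the last step of the paper's proof carry an algebraic slip.
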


\begin{proof}
By expanding Equation \ref{eqn:pertubed}, removing higher order terms (that is, keeping only linear terms), and considering the $i^{\text{th}}$ eigenvalue-eigenvector pairs we arrive at,
\begin{align} \label{eqn:proof_one}
  \mathbf{K}\delta \mathbf{v}_i + \Delta \mathbf{K} \mathbf{v}_i = \lambda_i \mathbf{M} \delta \mathbf{v}_i + \delta \lambda_i \mathbf{M} \mathbf{v}_i + \lambda_i \Delta \mathbf{M} \mathbf{v}_i.
\end{align}

Since $\mathbf{M}$ and $\mathbf{K}$ are Hermitian it implies that that the eigenvectors of Equation \ref{eqn:not_pertubed} are mutually $\mathbf{M}$-orthogonal. Moreover since they are assumed diagonsliable, these eigenvectors form a complete basis. Hence we can express each perturbation vector, $\delta \mathbf{v}_i$ as a sum of the eigenvectors of Equation \ref{eqn:not_pertubed}. As an equation this is,

\begin{align} \label{eqn:eig_basis}
\delta \mathbf{v}_i = \sum_{k=1}^N c_k \mathbf{v}_k,
\end{align}

for some arbitrary constants $c_k \in \mathbb{R}$. Thus, substituting Equation \ref{eqn:eig_basis} into Equation \ref{eqn:proof_one}, and using Equation \ref{eqn:not_pertubed}:

\begin{align} \label{eqn:proof_two}
    \sum_{k=1}^N c_k \lambda_k \mathbf{M} \mathbf{v}_k + \Delta \mathbf{K} \mathbf{v}_i = \lambda_i \mathbf{M}  \sum_{k=1}^N c_k \mathbf{v}_k + \delta \lambda_i \mathbf{M} \mathbf{v}_i + \lambda_i \Delta \mathbf{M} \mathbf{v}_i.  
\end{align}

Finally, left multiplying Equation \ref{eqn:proof_two} by $\mathbf{v}_i^{\intercal}$, and re-arranging results in,

\begin{align*} 
     \delta \lambda_i = \frac{\mathbf{v}_i^{\intercal}\left(\Delta \mathbf{K} - \lambda_i \Delta \mathbf{M}\right)\mathbf{v}_i}{\mathbf{v}_i^{\intercal}\Delta \mathbf{M} \mathbf{v}_i},
\end{align*}

since the eigenvectors $\mathbf{v}_i$ are $\mathbf{M}$-orthogonal.

\end{proof}

\begin{cor} \label{cor:cor_perturb}
Suppose we have the two following standard eigenvalue problems,
\begin{align}
    \lambda \mathbf{v} &= \mathbf{K} \mathbf{v} \label{eqn:not_pertubed_2}\\
    (\lambda+\delta \lambda) (v+\delta v) &= (\mathbf{K} + \Delta \mathbf{K}) (\mathbf{v} + \delta \mathbf{v}) \label{eqn:pertubed_2}.
\end{align}
Then,
\begin{align} \label{eqn:pert_K}
    \delta \lambda_i &= \frac{\mathbf{v}_i^{\intercal}\Delta \mathbf{K} \mathbf{v}_i}{\mathbf{v}_i^{\intercal} \mathbf{v}_i},
\end{align}
if $\mathbf{K}$ is Hermitian.
\end{cor}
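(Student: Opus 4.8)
The plan is to recognise Corollary \ref{cor:cor_perturb} as the special case of Lemma \ref{lem:gen_perturb} in which the mass matrix is the identity and is left unperturbed, i.e. $\mathbf{M} = \mathbf{I}$ and $\Delta\mathbf{M} = \mathbf{0}$, so that the generalised problem \ref{eqn:not_pertubed} collapses onto the standard problem \ref{eqn:not_pertubed_2}. One is tempted to substitute these choices directly into Equation \ref{eqn:pert_gen}, but this is degenerate: with $\Delta\mathbf{M} = \mathbf{0}$ the denominator $\mathbf{v}_i^{\intercal}\Delta\mathbf{M}\mathbf{v}_i$ vanishes. I would therefore not invoke the final quotient, but instead re-run the (short) argument of the Lemma in this specialised setting, where the normalisation that appears is the ordinary inner product $\mathbf{v}_i^{\intercal}\mathbf{v}_i$ rather than the $\mathbf{M}$-weighted one.

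Concretely, I would first expand the perturbed equation \ref{eqn:pertubed_2}, discard all terms of second order in the $\delta$ quantities, and use \ref{eqn:not_pertubed_2} to obtain the standard-problem analogue of Equation \ref{eqn:proof_one},
\begin{align*}
  \mathbf{K}\delta\mathbf{v}_i + \Delta\mathbf{K}\,\mathbf{v}_i = \lambda_i\,\delta\mathbf{v}_i + \delta\lambda_i\,\mathbf{v}_i .
\end{align*}
Since $\mathbf{K}$ is Hermitian its eigenvectors form a complete orthogonal basis (the $\mathbf{M}$-orthogonality of the Lemma reduces to ordinary orthogonality when $\mathbf{M}=\mathbf{I}$), so I would expand $\delta\mathbf{v}_i = \sum_{k=1}^N c_k\mathbf{v}_k$ exactly as in Equation \ref{eqn:eig_basis}, substitute, and apply $\mathbf{K}\mathbf{v}_k = \lambda_k\mathbf{v}_k$.

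Finally, I would left-multiply by $\mathbf{v}_i^{\intercal}$ and invoke orthogonality, $\mathbf{v}_i^{\intercal}\mathbf{v}_k = 0$ for $k\neq i$. This annihilates every $k\neq i$ term in both sums and leaves the two surviving $k=i$ contributions, namely $c_i\lambda_i\,\mathbf{v}_i^{\intercal}\mathbf{v}_i$ on each side, to cancel one another, reducing the expression to the single scalar identity $\mathbf{v}_i^{\intercal}\Delta\mathbf{K}\,\mathbf{v}_i = \delta\lambda_i\,\mathbf{v}_i^{\intercal}\mathbf{v}_i$. Dividing through by $\mathbf{v}_i^{\intercal}\mathbf{v}_i$ then yields Equation \ref{eqn:pert_K}. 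I do not anticipate any genuine obstacle; the only point demanding care is the degeneracy noted above, namely that the corollary must be read off from the unnormalised identity arising within the Lemma's derivation rather than from its final quotient, since setting $\Delta\mathbf{M}=\mathbf{0}$ in the latter is ill-defined.
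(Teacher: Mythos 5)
Your proposal is correct and takes essentially the same route as the paper, whose entire proof is the statement that the result follows by repeating the steps of Lemma \ref{lem:gen_perturb} in the absence of the $\mathbf{M}$ and $\Delta\mathbf{M}$ matrices --- precisely the specialised re-derivation you carry out in detail. Your added caution about not reading the corollary off the final quotient in Equation \ref{eqn:pert_gen} is sensible, since with the denominator as printed there ($\mathbf{v}_i^{\intercal}\Delta\mathbf{M}\mathbf{v}_i$) the substitution $\Delta\mathbf{M}=\mathbf{0}$ is indeed ill-defined, so re-running the argument, as both you and the paper do, is the clean way to obtain Equation \ref{eqn:pert_K}.
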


\begin{proof}
The proof follows similarly from repeating the steps shown in Lemma \ref{lem:gen_perturb}, in the absences of the $\mathbf{M}$ and $\Delta \mathbf{M}$ matrices.
\end{proof}

Note that in Equations \ref{eqn:pert_gen} and \ref{eqn:pert_K}, it is not necessarily true that $\mathbf{v}_i^{\intercal}\Delta \mathbf{M} \mathbf{v}_i = 1$ since each eigenvector $\mathbf{v}_i$ is only orthogonal with respect to the $\bm{M}$ matrix, and not $\Delta \bm{M}$. A similar argument may be made with the $\bm{K}$ and $\Delta \mathbf{K}$ matrices. In addition, Equations \ref{eqn:pert_gen} and \ref{eqn:pert_K} make clear how the perturbation matrices, $\bm{\Delta}$, impact the changes in the eigenvalues, $\delta \lambda$, up to a linear approximation (since in the derivation the higher order effects were ignored). As a result this relationship may be used in better understanding gradient-based relationships for eigenvalue problems. That is, we may use these to analyse the potential accuracy and or quality of gradient-based approaches for such problems. An investigation of these ideas is made clear in Figure \ref{fig:step_sizes}.

\begin{figure}[ht] 
\centering
\subfloat[Different $p$ magnitudes plotted against $|\Delta \delta \lambda_1|_{\mu}$ term as a percentage error.]{
\hspace{-1cm}
\includegraphics[width=0.55\linewidth]{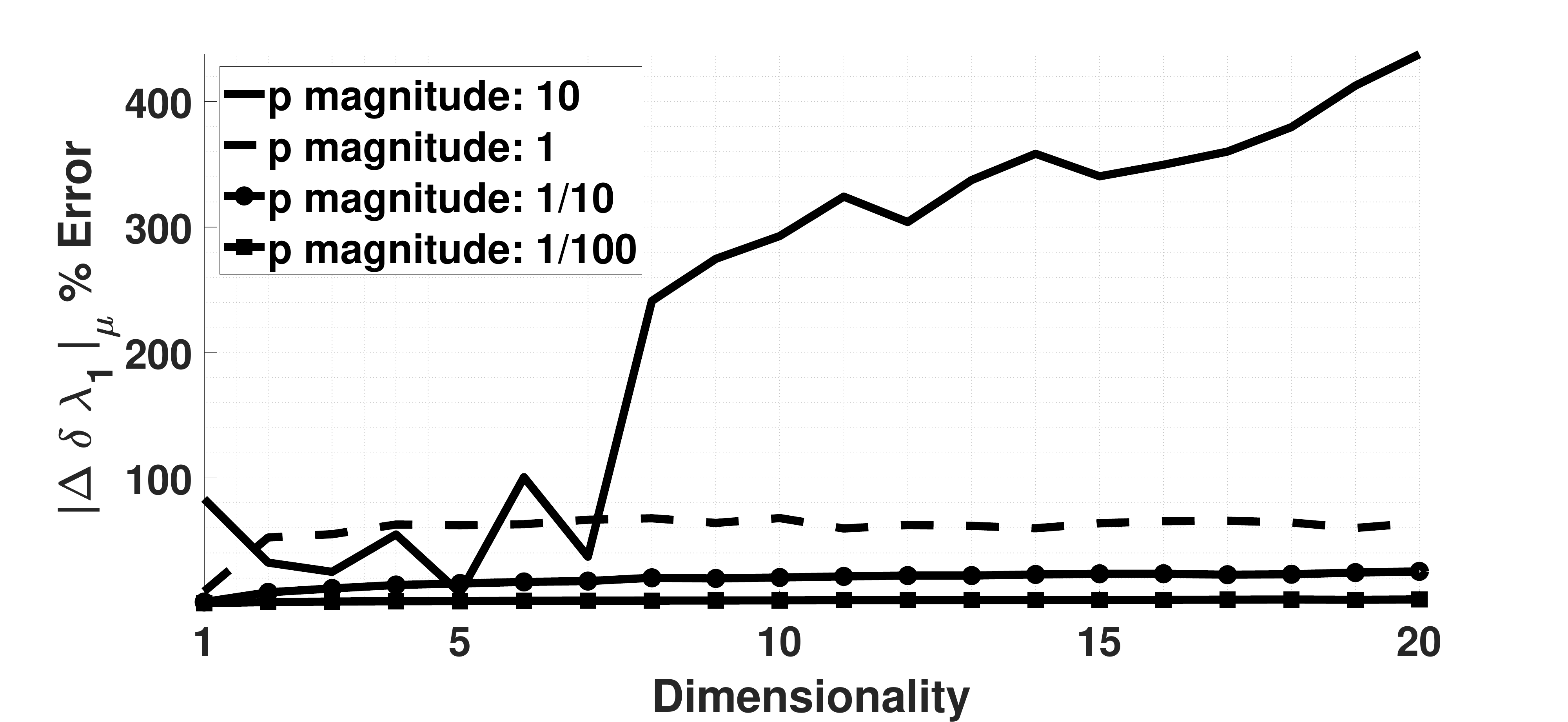}
\label{fig:step_sizes_not_zoomed}
}
\subfloat[A zoomed-in version of subplot \ref{fig:step_sizes_not_zoomed} to more clearly show the cases of $p=1/10$ and $p=1/100$.]{
\label{fig:step_sizes_zoomed}
\hspace{-1cm}
      \includegraphics[width=0.55\linewidth]{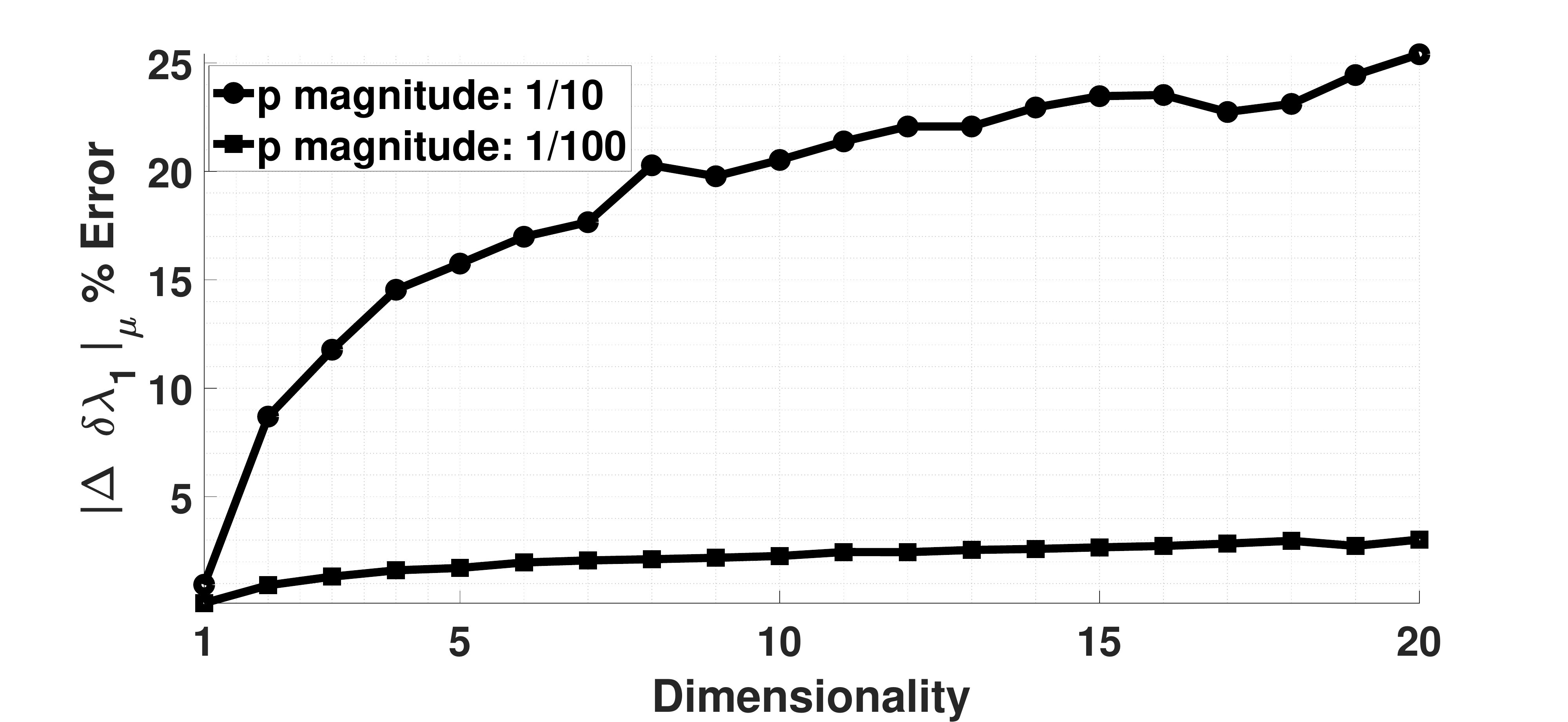}
}
  \caption{Plots describing how the average magnitudes of the elements (defined by Equation \ref{eqn:p_dist}) inside the $\Delta \bm{M}$ and $\Delta \bm{K}$ matrices affect the $\Delta \delta \lambda_1$ term, defined in Equation \ref{eqn:perc_err}.}
    \label{fig:step_sizes}
\end{figure}

Equation \ref{eqn:perc_err} is used order to calculate the percentage errors for the $|\Delta \delta \lambda_1|$ values in Figure \ref{fig:step_sizes}, where $\delta \lambda_1^{\dagger}$ refers to the change in first eigenvalue as obtained from Equation \ref{eqn:pert_gen}, and $\delta \lambda_1$ is obtained using Equations \ref{eqn:gen_eig_prob} and \ref{eqn:mod_eig_prob}, which refers to the benchmark correct value.

\begin{align} \label{eqn:perc_err}
    |\Delta \delta \lambda_1|_{\mu} &\coloneqq \mathbb{E}(|\Delta \delta \lambda_1|)  \nonumber\\ &\hspace{1mm}= \mathbb{E}\left(\frac{|\delta \lambda_1^{\dagger}-\delta \lambda_1|}{\delta \lambda_1}\right)
\end{align}


In order to calculate the size of terms inside the $\bm{\Delta}$ matrices, Equation \ref{eqn:p_dist} is used, where $\mathcal{U}$ refers to the uniform distribution, the $i,j = 1,...,d$ indices refer to each term in the matrix, and $d = 1,...,20$ defines the dimensionality (size) of the matrices. Through this definition there are $d^2$ degrees of freedom inside the matrix at any time.

\begin{equation} \label{eqn:p_dist}
    \bm{\Delta}_{i,j} \sim p\cdot\mathcal{U}[0,1], 
\end{equation}

The value of $d$ changes because we are considering the effect of dimensionality on the quality of the linear step size, $\delta \lambda$. Moreover the scalar $p \in \{1/100,1/10,1,10\}$  defines the magnitude of the terms in the $\bm{\Delta}$ matrices. Thus $p$ in a practical sense (that is, in reference to a gradient-based optimisation algorithm) can be interpreted as the \textit{step size} of the algorithm.

From Figure \ref{fig:step_sizes} it can be seen that as the average magnitude of the elements inside the $\bm{\Delta}$ matrices increase, the absolute difference between the theoretical $\delta \lambda_1$ value, and those calculated via Equation \ref{eqn:pert_gen}, that is, $\delta \lambda^{\dagger}_1$, becomes larger. Even when the average step size takes value p = 1/100, there appears to be a bias in the magnitude of the error, which is made clear in the \textit{zoomed in} subplot of Figure \ref{fig:step_sizes_zoomed}. Moreover as the dimensionality of the $\bm{\Delta}$ matrices increase, the $|\Delta \delta \lambda_1|$ errors appear to increase slightly. Hence in summary as this experimental analysis of Equation \ref{eqn:pert_gen} seems to suggest, gradient based methods are potentially difficult to implement. In particular, it would appear that we would require $p< 1/100$ at a minimum, and that this value would need to continually decrease as dimensionality increases. For this reason, we opt to explore the viability of particle swarm optimsation as a means for optimisation since it is a non-gradient based approach, and gradient-based approaches seem to require very small step sizes for accurate gradients.

\subsection{Particle Swarm Optimisation}

Particle Swarm Optimisation (PSO) is a stochastic, evolutionary optimisation first proposed by Kenedy and Eberhart\cite{eberhart1995new}. The algorithm works by generating an array of candidate particles (the swarm) across an objective space. Within this space each particle searches for the global optimum through the sharing of information within the swarm in a classic exploration-exploitation trade-off. This is clarified in Equations \ref{eqn:PSO} and \ref{eqn:PSO2}. 

\begin{align} 
    v_{i}^{k+1} &= \omega v_{i}^{k} + c_1r_1(p_i - x_i^k) + c_2r_2(p_G - x_i^k) \label{eqn:PSO} 
\end{align}
\vspace{-10mm}
\begin{align}
        x_{i}^{k+1} &= x_{i}^{k} + \alpha v_{i}^{k+1} \label{eqn:PSO2}
\end{align}

where $\alpha$ denotes step size, $\omega$ controls the particle's inertia, $c_1$ and $c_2$ (known as the acceleration coefficients) are constants which control the degree of the \textit{exploration}-\textit{exploitation} trade-off, $p_i$ and $p_G$ are the local optima, and global optimum, for each, and across all particles respectively (that is each particle stores their own local optimum, but shares knowledge of the current global optimum), and $r_1, r_2 \sim \mathcal{U}(0,1)$. In this way, every particle is made aware of the current global optimum, and explores the objective space accordingly (as specified through the $c_1$ and $c_2$ constants). 

Empirical studies in PSO theory have shown that the correct choice of inertia weight is critical in ensuring convergent behaviour of the algorithm \cite{van2010convergence}. Prior investigations have suggested that the choice of inertia is driven by the acceleration coefficients through: $2\omega > (c_1 + c_2) - 2$. This region describes the set of all $c_1$, and $c_2$ values which guarantee convergent behaviour based on the spectral analysis of the matrix describing the PSO dynamics \cite{van2006study}. However this inequality should be only be taken as a rough guide since it was derived assuming the PSO system has one particle, and one dimension. Empirically however, Eberhart and Shi suggest using values of $\omega=0.7298$ and $c_1 = c_2 = 1.49618$ for \textit{good} convergent behaviour in general \cite{eberhart2000comparing}. 


\subsection{Random Embedding}

An aim of this paper is to explore the capability of random projections to reduce the underlying dimensionality of the generalised eigenvalue problem. In particular, we propose an extremely general parameterisation of the $\Delta \mathbf{M}$, and $\Delta \mathbf{K}$ matrices, and explore whether or not it is possible to solve this problem in a lower dimensional space. The lowest dimensional space in which the problem may be solved completely is known as the \textit{effective dimension}, and is denoted by $d_e$. In particular the following definition is used to strictly define the notion of $d_e$, where Definition 4.1 is based on Definition 1 of Wang et al. \cite{wang2016bayesian}.

\begin{definition}[Effective Dimension]
Suppose there exists a linear subspace $\mathcal{T} \subset \mathbb{R}^D$, where $\text{dim}(\mathcal{T}) = d_e < D$.
A function $f : \mathbb{R}^D \to \mathbb{R}$ is said to have \textbf{effective dimensionality} $d_e$, if $d_e$ is the smallest integer such that $\forall   x \in \mathcal{T}$ and $x^{\bot} \in \mathcal{T}^{\bot} \subset \mathbb{R}^D$, where $\mathcal{T} \small{\oplus} \mathcal{T}^{\bot} = \mathbb{R}^D$, we have $f(x + x^{\bot}) = f(x)$.
\end{definition}

A simple example to clarify this defintion for the reader may be seen if we define the following function: $f(x_1,x_2) = x_1^2 + x_2^2, $ $\text{where, } f : \mathbb{R}^{10} \to \mathbb{R}, \text{ and } x_1, x_2 \in \mathbb{R} $. In this example, although the original space of $f$ is assumed to be 10-dimensional, one may easily observe that it has an \textit{effective dimension} of 2 (that is, $d_e = 2$), since it clearly only makes use of 2 dimensions, of the 10 possible dimesions it has access to. The remaining 8 dimensions are \textit{ineffective dimensions}. Unfortunately, in practice we never really know the actual value of $d_e$, but we either know or assume from prior knowledge that our problem may have a lower dimensional representation. In other words, in practice we only ever know or use $d\in \mathbb{Z}$ dimensions in total, where $D \geq d \geq d_e$, and thus our random embedding generally occurs via random matrices with dimensionality $D\times d$.

Although the notion of effective dimensionality is developed, it does not explain how such random projections to lower dimensional spaces should occur. Ideally when projecting to a lower dimensional space we desire $||T(x_i - x_j) || \approx ||(x_i - x_j) ||$, where $T: \mathbb{R}^n \to \mathbb{R}^m$ is some linear operator. That is, we aim to reduce the dimensionality of a set of points in some Euclidean space, which approximately retains these pair-wise distances measures in this new, lower-dimensional subspace. A bound on the degree of \textit{distortion} that occurs to the original space when we project to a lower dimensional space is famously shown through the Johnson-Lindenstrauss (JL) Lemma \cite{johnson1984extensions}, stated in Lemma \ref{lem:rand_emb}.

{\lem[\textbf{Johnson-Lindenstrauss Lemma}]{For any $0 < \epsilon < 1$, and for any integer n, let k be such that \begin{align*} \label{lem:rand_emb}
 k \geq 4 \frac{1}{\epsilon^2/2 - \epsilon^3/3}\log(n)   
\end{align*}
Then for any set $X$ of $n$ points in $\mathbb{R}^d$, there exists a linear map $f: \mathbb{R}^d \to \mathbb{R}^k$ such that $\forall x_i \in X$,
\begin{align}
    (1-\epsilon)||x_i - x_j||^2 \leq ||f(x_i) - f(x_j)||^2 \leq (1+\epsilon)||x_i - x_j||^2
\end{align}
}}

In effect the JL Lemma tells us that the quality of the projection down to some dimension $k$, is a function of some allowable error tolerance, $\epsilon$, and the amount of points invovled in the projection $n$. In particular the relative Euclidean distances will be distorted by a factor of no more than $(1\pm \epsilon)$, where $\epsilon \in (0,1)$. Note that this makes no reference to the initial dimension of the points existed in before the projection occured. 

Although the JL Lemma is used commonly with large datasets, we are only working with indiviudal, possibly high-dimensional points, which are used as inputs into functions used in an objective function. Thus in the case of random projections of data points into functions we also must consider the effective dimension $d_e$. Theorem 2 of Wang et al.\cite{wang2016bayesian} implies that no matter the degree of distortion which may occur, there shall always exist a solution in this lower dimensional space. This theorem is restated here in order to self-contain the paper. 

\begin{theorem}[\textbf{Wang's Existance Theorem}]Assume we are given a function $f : \mathbb{R}^D \to \mathbb{R}$ with effective dimensionality  $d_e$ and a random matrix $A\in\mathbb{R}^{D\times d}$ with independent entries sampled according to $\mathcal{N}(0,1)$, where $d\geq d_e$. Then with probability 1, for any $x \in \mathbb{R}^D$, there exists a $y\in \mathbb{R}^d$ such that $f(x) = f(Ay)$\end{theorem}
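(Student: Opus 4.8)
The plan is to exploit the defining property of effective dimension to reduce the claim to a statement about the surjectivity of a random linear map, and then to invoke a standard measure-zero argument to establish full rank almost surely. First I would fix an orthonormal basis for the effective subspace $\mathcal{T}$, collecting it as the columns of a matrix $U \in \mathbb{R}^{D \times d_e}$, so that $P \coloneqq U U^{\intercal}$ is the orthogonal projector onto $\mathcal{T}$ and $U^{\intercal} U = I_{d_e}$. Given an arbitrary $x \in \mathbb{R}^D$, I would write its decomposition $x = Px + (I - P)x$ with $Px \in \mathcal{T}$ and $(I-P)x \in \mathcal{T}^{\perp}$; by the effective-dimension hypothesis this gives $f(x) = f(Px)$.

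Next I would observe that it suffices to find $y \in \mathbb{R}^d$ for which $P(Ay) = Px$, because then $Ay$ and $x$ share the same component in $\mathcal{T}$ and differ only by a vector in $\mathcal{T}^{\perp}$, whence $f(Ay) = f(P(Ay)) = f(Px) = f(x)$. Writing everything in the coordinates supplied by $U$, the condition $P(Ay) = Px$ becomes the linear system $U^{\intercal} A\, y = U^{\intercal} x$ in the unknown $y$. This system is solvable for every right-hand side precisely when the $d_e \times d$ matrix $U^{\intercal} A$ has full row rank $d_e$; since $d \geq d_e$ this is the generic situation, and crucially it is a single condition on $A$ that simultaneously handles all choices of $x$.

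It then remains to verify that $U^{\intercal} A$ has rank $d_e$ with probability $1$. Here I would use that the columns of $A$ are independent $\mathcal{N}(0, I_D)$ vectors and that $U$ has orthonormal columns, so each column of $U^{\intercal} A$ is distributed as $\mathcal{N}(0, U^{\intercal} U) = \mathcal{N}(0, I_{d_e})$ while remaining mutually independent; hence $U^{\intercal} A$ is itself a $d_e \times d$ matrix of independent standard Gaussians. The event that such a matrix fails to have full row rank is the event that all of its $d_e \times d_e$ minors vanish, a polynomial condition carving out a set of Lebesgue measure zero, to which the absolutely continuous Gaussian law assigns probability zero.

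The main obstacle --- or at least the only point requiring care --- is the order of the quantifiers: the theorem asserts that almost surely a \emph{single} $A$ works for \emph{every} $x$, rather than that each fixed $x$ is handled by a full-measure set of realisations of $A$. I would stress that reducing the problem to full row rank of $U^{\intercal} A$ dissolves this difficulty, since surjectivity of the associated map $y \mapsto U^{\intercal} A\, y$ onto $\mathbb{R}^{d_e}$ is a property of $A$ alone and, once it holds, yields the required $y$ uniformly in $x$; no union bound over the uncountable family of inputs is needed.
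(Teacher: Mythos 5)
The paper itself offers no proof of this theorem: it is restated verbatim from Wang et al.\ (Theorem 2 of the cited reference) ``in order to self-contain the paper,'' so there is no internal argument to compare against. Your proof is correct and is essentially the same argument given in Wang et al.'s original paper: decompose $x$ via the orthogonal projector onto the effective subspace $\mathcal{T}$, reduce the claim to solvability of $U^{\intercal}A\,y = U^{\intercal}x$, and note that $U^{\intercal}A$ is a $d_e \times d$ matrix of i.i.d.\ standard Gaussians, which has full row rank $d_e$ almost surely since rank deficiency is a polynomial (hence Lebesgue-null) condition; your explicit handling of the quantifier order --- that full row rank is a single almost-sure event in $A$ serving all $x$ simultaneously --- is exactly the point that makes the original proof work.
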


That is, we should always be able to find some $y$ such that $f(x) = f(Ay)$, where $A \in \mathbb{R}^{D \times d}$ is some random matrix. And thus the distortion of the projection as predicted by the JL Lemma is not as important of a factor if we can ensure $d \geq d_e$, since with a good enough optimisation algorithm, if there exists a $x^{\star} \in \mathbb{R}^D$ which is optimal, then there exists a $y^{\star} \in \mathbb{R}^d$ such that $f(x^{\star}) = f(Ay^{\star})$. In practice however we may by chance select some $d < d_e$, and so in these cases it may become informative to use JL Lemma as a guide to assist in undesrtanding the degree of distortion which did indeed occur by projecting into this new subspace. 

In order to ensure that the point-wise distances in this new subspace abides by the JL Lemma we consider random matrices of the form: $A_{i,j} \sim \mathcal{N}(0,1/\sqrt{d})$. This is simply a scaled version of the random Gaussian matrices proposed by Wang et al. in Threom 2, but a Gaussian matrix of this form is known to better preserves distance properties in this new subspace \cite{johnson1984extensions}.



An issue which may occur when trying to use random embeddings for the purpose of optimisation is that the optimisation bounds which are defined in the larger $D$-dimensional space may be violated in the lower $d$-dimensional space. Thus, it is suggested to use a convex projection method to ensure that any variables $y \in \mathbb{R}^d$ fall the into bounding constraints defined by variables $x \in \mathbb{R}^D$. This idea is shown in Figure \ref{fig:embed_prop}.

\begin{figure}[H]
\centering
  \includegraphics[width=0.45\linewidth]{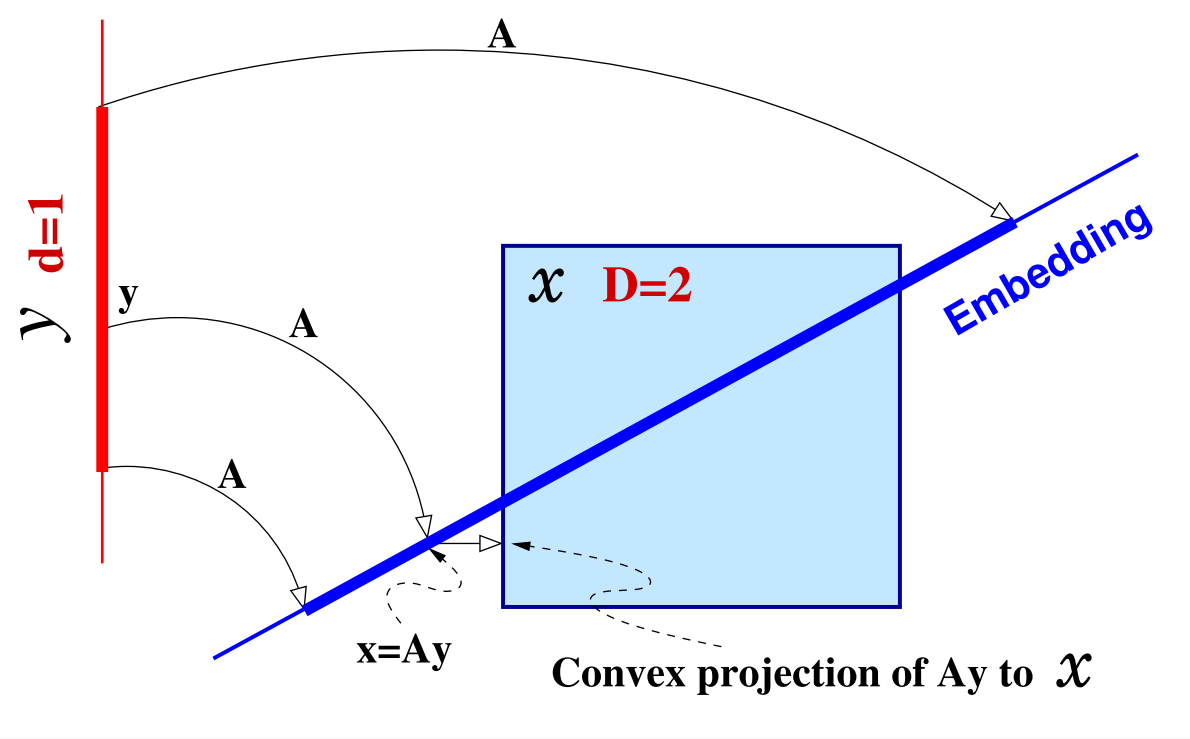}
  \caption{Need to perform a convex projection \cite{wang2016bayesian}.}
  \label{fig:embed_prop}
\end{figure}

Assuming that the feasible set for $\mathbf{x}$ is defined with box constraints, denoted by $\mathcal{X} \coloneqq [-c,c]^D$ where $c\in\mathbb{R}^{+}$, a simple way to ensure that $\mathbf{Ay}$ maps to the range defined by $\mathcal{X}$ may be achieved through a least squares method \cite{wang2016bayesian}. Mathematically, we define: $p_{\mathcal{X}}(\mathbf{Ay})= \mathrm{argmin}_{\mathbf{z}}||\mathbf{z-Ay}||_2^2$, where $\mathbf{Ay} \in \mathbb{R}^D$, $\mathbf{z} \in \mathcal{X}$, and $p_{\mathcal{X}} : \mathbb{R}^D \to \mathcal{X}$ denotes \textit{projection}. This least squares method effectively drops the perpendicular from points outside the bounding box, $\mathcal{X}$, which can be located aribtrarily in $\mathbb{R}^D$, towards the nearest point on the boundary of $\mathcal{X}$. This is made clear in Figure \ref{fig:embed_prop}. 

\section{Results}

\subsection{Ten-Dimensional Toy Problem}

 In this section the results of applying the combination of PSO and random embedding for inverse eigenvalue problems in structural engineering are explored. We begin by considering the 10 degree of freedom (DoF) system defined in Equations \ref{eqn:Mass_Sivan} and \ref{eqn:Stiff_Sivan}. The stiffness matrix is based on that of Sivan \& Ram \cite{sivan1996mass}, with the mass matrix being modified from a diagonal of ones to allow for a more complex scenario. 

\begin{align} \label{eqn:Mass_Sivan}
\mathbf{K} = 
\begin{pmatrix}
     200 & -10 & -20 & -5 & -5 & -10 & 0 & 0 & -50 & -50\\ -10 & 100 & 0 & 0 & 0 & 0 & -20 & -10 & -20 & -10\\ -20 & 0 & 300 & -40 & -30 & -60 & -10 & 0 & -20 & -10\\ -5 & 0 & -40 & 400 & -30 & -40 & -50 & -20 & -10 & -70\\ -5 & 0 & -30 & -30 & 150 & -10 & -5 & -5 & -20 & 0\\ -10 & 0 & -60 & -40 & -10 & 250 & 0 & 0 & 0 & -80\\ 0 & -20 & -10 & -50 & -5 & 0 & 120 & -5 & 0 & -10\\ 0 & -10 & 0 & -20 & -5 & 0 & -5 & 250 & 0 & -100\\ -50 & -20 & -20 & -10 & -20 & 0 & 0 & 0 & 350 & -40\\ -50 & -10 & -10 & -70 & 0 & -80 & -10 & -100 & -40 & 400 
\end{pmatrix}
\end{align}

\begin{align} \label{eqn:Stiff_Sivan}
     \mathbf{M} = \text{diag}(1,...,10)
\end{align}

The first two eigenvalues of the generalised eigenvalue problem defined through these particular mass and stiffness matrices are given in Equation \ref{eqn:eig_Sivan}.

\begin{align} \label{eqn:eig_Sivan}
    \bm{\Lambda} = \begin{pmatrix} 10.99, 19.12  \end{pmatrix}
\end{align}

Our aim for this problem will be to find some $\Delta \bm{M}$ and $\Delta \bm{K}$ matrices which will transform the system eigenvalues into those specified by Equation \ref{eqn:eig_Sivan_mod}. 

\begin{align} \label{eqn:eig_Sivan_mod}
    \bm{\Lambda}^{\star} = \begin{pmatrix} 2.00, 5.00 \end{pmatrix}
\end{align}

In particular we shall work to minimise the objective function defined previously in Equation \ref{eqn:obj_fun_init} in order to find the associated the $\bm{\Delta}^{\star}$ matrices. We assume without loss of generality that the $\bm{\Delta}$ matrices are upper triangular, and real valued since the underlying $\bm{M}$ and $\bm{K}$ matrices are Hermitian and so are by definition symmetric. That is, it suffices to perturb only the upper (or lower) triangular portion of the $\bm{M}$ and $\bm{K}$ matrices. This means that for each of $\Delta \bm{K} \in \mathbb{R}^D$and $ \Delta \bm{M}\in \mathbb{R}^D $, there are $D(D+1)/2$ free parameters. In this way the amount of free parameters grows on the order of $\mathcal{O}(D^2)$. We thus use random projections to reduce the effect of this quadratic complexity. One convergence was run for a random projection which reduced the overall dimensional size by an order of magnitude, and another was reducing it by a factor of a half.  Convergence results are shown in Figure \ref{fig:toy_data_conv}.

\begin{figure}[ht!]
\centering
\subfloat[The average convergence rates of a 110 dimensional space and a 10 dimensional space.]{
\hspace{-1.5cm}
\label{fig:toy_data_conv_a}
\includegraphics[width=0.55\linewidth]{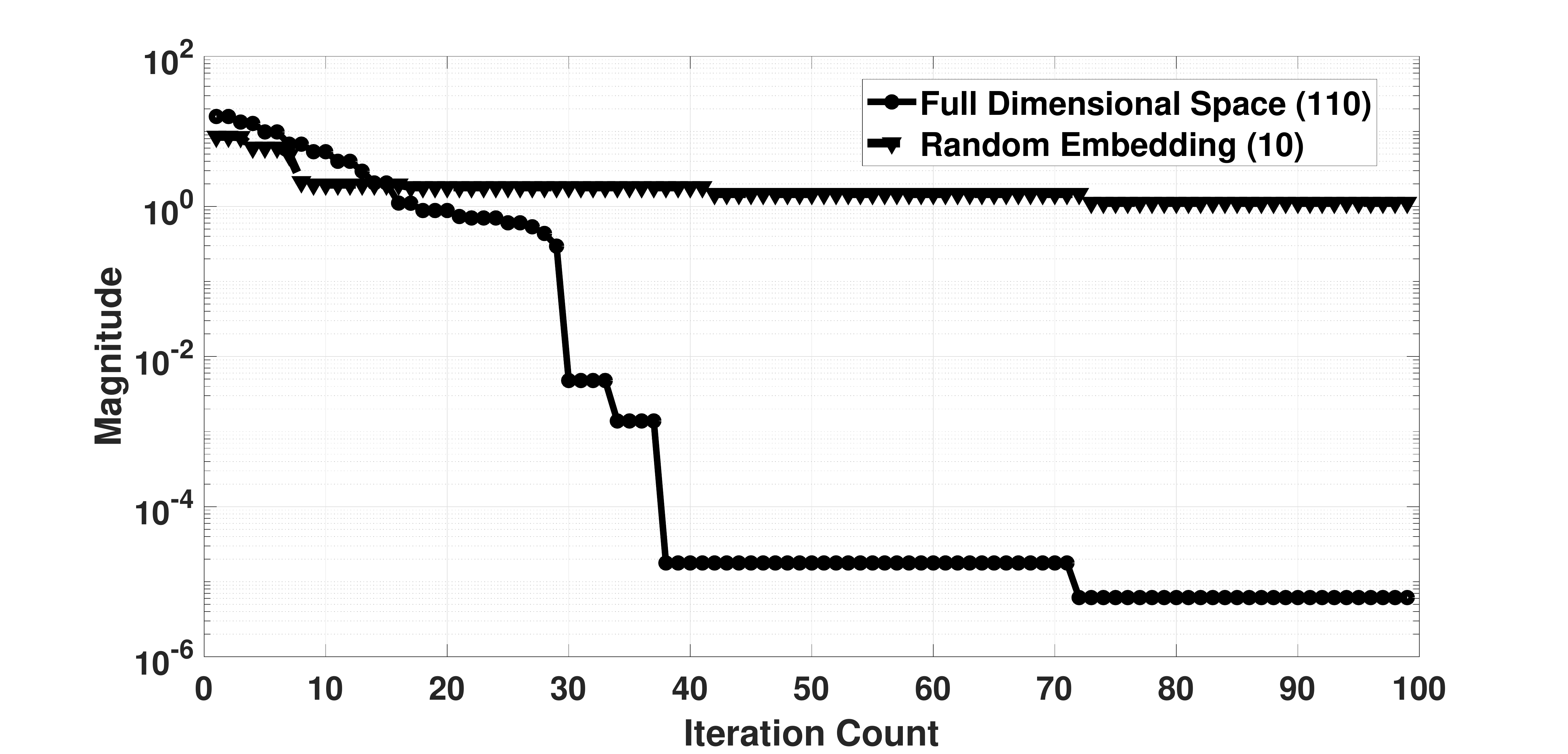}
}
\subfloat[The average convergence rates of a 110 dimensional space and a 50 dimensional space.]{
\hspace{-1cm}
\label{fig:toy_data_conv_b}
      \includegraphics[width=0.55\linewidth]{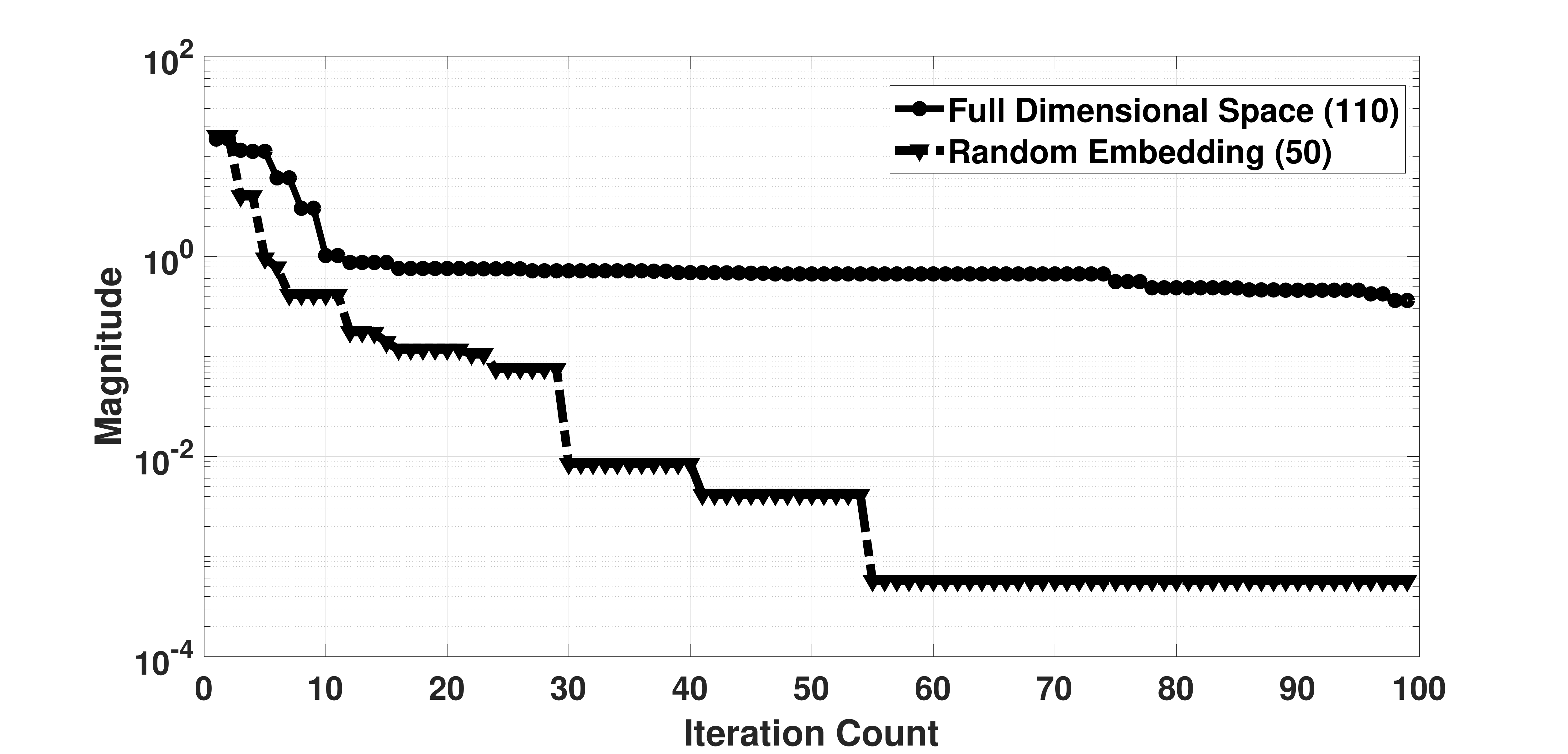}
}
  \caption{A comparison of the convergence rates between the full-dimensional optimisation problem, and the reduced dimension problem.}
  \label{fig:toy_data_conv}
\end{figure}

From Figure \ref{fig:toy_data_conv_a}, it would appear that the initial rate of decrease of the random embedding space is slightly faster than the full dimensional space. However, as Figure \ref{fig:toy_data_conv_a} then further suggests, although a low dimensional space may give a faster initial convergence rate, if the dimensionality reduction is too great, the optimisation routine can become plateau after a certain amount of iterations. This is seen in Figure \ref{fig:toy_data_conv_a} as the random embedding space remains at approximately $10^0$ after iteration 9. Thus it is clear that with this level of dimensionality reduction (for this particular problem), decreasing the dimensions by an order of magnitude seems to introduce a bias into the optimistion problem. It is conjectured that this is because the value of `10' lies well below the \textit{effective dimension} (the $d_e$ value) for this problem. This hypothesis is supported if we examine the same optimisation problem but instead reduce the exploration space to 50 dimensions as in the case of Figure \ref{fig:toy_data_conv_b}. We notice that not only do we have the faster initial decrease in optimisation rate, but also we converge to better values, faster. 

The reader may additionaly notice that in Figure \ref{fig:toy_data_conv_a} the 110 dimensional space reaches values as low as $10^{-6}$, but in Figure \ref{fig:toy_data_conv_b} it apparently stalls at $10^0$. However this is only due to the truncation of the plotting, since at has been hinted at approximately iteration 98 the full dimensional space starts to decrease its magnitude value again. But this only serves to demonstrate the notion within a 100 iteration limit the random projection has allowed to the optimisation to converge better values, significantly faster on average. Indeed, the full dimensional space will \textit{eventually} reach values as low as $10^{-6}$, however this toy problem suggests that on average it will not be as fast. Also note that since Figure \ref{fig:toy_data_conv_b} does not exhibit the same bias problems as in Figure \ref{fig:toy_data_conv_a}, it would appear that $10 \leq d_e \leq 50$. Hence although we have not been able to determine the \textit{effective dimension}, we can infer a range of existance for it. 

Also important to take note off is that for this problem the JL Lemma is not readily applicable since this 110 dimensional space is too low for the Lemma to take practical significance. In particular, if we set $n=110$ and assume error values of $\epsilon \in \{0.1,0.3,0.7,1.0\}$ we arrive at Table \ref{tab:JL_toy}. 

\begin{table}[h]
\centering
\begin{tabular}{ccccc}
\hline
Distortion Error (\%) & \textbf{10} & \textbf{30} & \textbf{70} & \textbf{100} \\ \hline
Dimension & 4029 & 523 & 143 & 112 \\ \hline
\end{tabular}
\caption{How the distortion error effects the corresponding dimension of the mapped subspace, for $n=110$ in accordance with the JL Lemma.}
\label{tab:JL_toy}
\end{table}

From Table \ref{tab:JL_toy} we see that for our toy problem, for an initial dimension of 110, the \textit{sufficient} dimension to guarantee no more than a 10\% error in the Euclidean distances between points in the new space is $4029 >> 110 > 50$. This number seems unreasonable because in lower dimensions the bounds predicted by the JL Lemma are not tight. That is these bounds serve give an idea of \textit{sufficiency}. Consequently by inspecting the mathematical equation of this bound, if the initial number of points is relatively low (as is the case for this toy example) we will not achieve a practically meaningful answer. However an important takeaway from the JL Lemma is that its formulation makes no reference to the initial dimension of projection, only the number of points considered, and so whether or not we began with $110$ points, or $110^{110}$ points, the \textit{sufficient} dimension to guarantee no more than a 10\% error between points after a random embedding is $110 < 4029 << 110^{110}$. Thus in this case it can be concluded that the initial $n=110$ value is too small for the JL Lemma to be directly useful.

\subsection{One-Dimensional Boeing-737 Finite Element Problem}

Here we shall explore how the PSO algorithm coupled with random embeddings may be exploited to assist in solving a truncated version of the generalised inverse eigenvalue problem for a 1D Boeing 737-300 (B737) Finite Element (FE) problem. In particular the model used to analyse the B737 plane is outlined in Figure \ref{fig:B737}. This model is based upon one found in Theory of Matrix Structural Analysis \cite{przemieniecki1985theory}.

\begin{figure}[ht!]
\centering
\subfloat[Frontal view of 1D Boeing 737-300 model showing the fuselage mass, and the wings.]{
\includegraphics[width=0.55\linewidth]{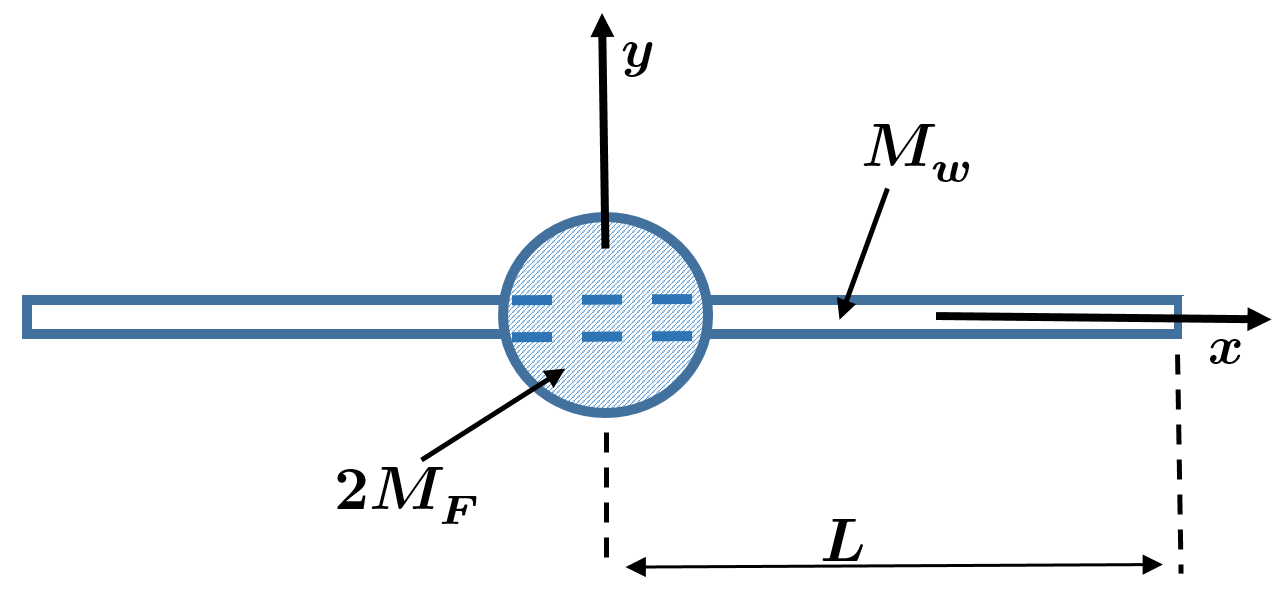}
    \label{fig:B737_a}
}
\\
\subfloat[Frontal view of the 1D point-mass FE discretisation of the Boeing 737-300 model.]{
      \includegraphics[width=0.45\linewidth]{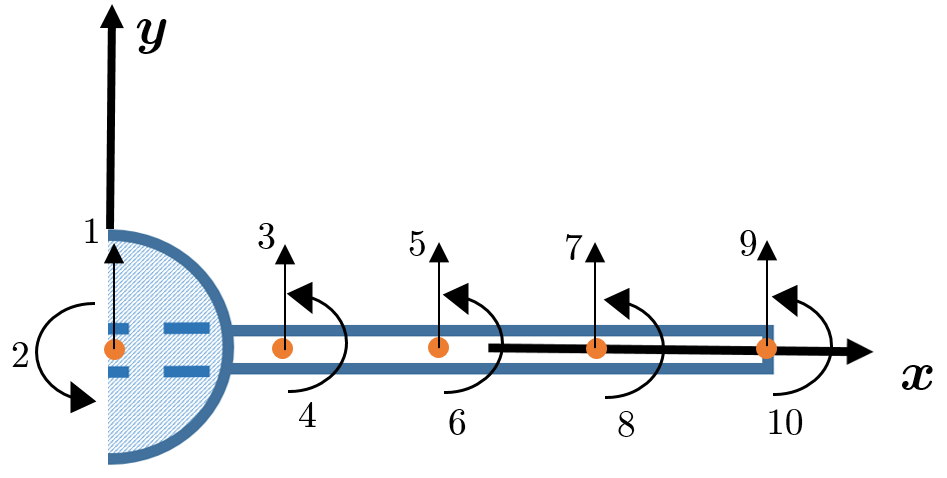}
      \label{fig:B737_b}
}
  \caption{Frontal views of the 1D Boeing 737-300 model which shall be used in the inverse eigenvalue problem. Images based upon Przemieniecki \cite{przemieniecki1985theory}.}
  \label{fig:B737}
\end{figure}

In Figure \ref{fig:B737} it is assumed that the total wing mass is uniformly distributed over the wing span of length $2L$, and its mass is $2M_w$. Moreover the total mass of the fuselage is $2M_F$. The wing elements are approximated as being finite element beam structures and have flexural stiffness given by $EI$, with the effects of shear deformations and rotary inertia neglected. Assuming only two FE nodes were used in this model, then the one element FE beam matrix for this problem is outlined in Equation \ref{eqn:FE_element}, where $R$ denotes the mass ratio between the fuselage and the wing, that is, $R = M_F/M_w$, and  $\mathbf{q} = \begin{bmatrix} w_1, \frac{\partial w_1}{\partial x},w_2, \frac{\partial w_2}{\partial x}  \end{bmatrix}^{\intercal}$. In order to estimate a value for $R$ the parameters for a B737 where obtained from literature, and summarised in \cref{tab:B737_general,tab:B737_fuselage,tab:B737_wing}. In order for the FE model to increase its accuracy more nodes must used, which involves constructing a large block matrix using the elements defined in Equation \ref{eqn:FE_element}.

\begin{align} \label{eqn:FE_element}
    \left( \frac{EI}{L^3} 
    \begin{bmatrix}
    12 & \text{symmetric} & & \\
    6L & 4L^2 & & \\ 
    -12 & -6L &  12 &\\
    6 & 2L^2 & -6L & 4L^2 
\end{bmatrix}  - \lambda M_w
  \begin{bmatrix}
  \frac{13}{35}+R & \text{symmetric} &  \\
  \frac{11}{210}L & \frac{L^2}{105} & & \\
       \frac{9}{70} & \frac{13}{240}L & \frac{13}{35} &  \\
    -\frac{13L}{420} & \frac{-L^2}{140} &-\frac{11L}{210} & \frac{L^2}{105} 
\end{bmatrix}  \right) \mathbf{q} = \mathbf{0}
\end{align}

As per the recommendation of Przemieniecki\cite{przemieniecki1985theory} the modal analysis of this structure may be separated into its symmetric and asymmetric counterparts, since this wing is symmetric around its fuselage centre. In order to enforece a symmetric condition, the second row and column of Equation \ref{eqn:FE_element} needs to be removed since it represents a degree of rotational freedom of the fuselage mass. For symmetry the fuselage mass is only allowed the move in a translational sense (up an down), and as such should not have any gradient (that is, not be able to rotate about some axis). The opposite is true in the case of anti-symmetry where instead the first row and column of the mass and stiffness matrices were removed, since in the case of asymmetry, the fuselage is allowed to rotate about an axis and thus have a defined gradient. Note that although doing this will not dramatically change the eigenvector response of the full system (if they are scaled properly), it can shift the eigenvalues appreciably. 

In order to assess the validity of this symmetric - asymmetric separation, Figures \ref{fig:sym_bending} and \ref{fig:asym_bending} representing the modal responses were constructed. Firstly, the mode shapes are consistent with those formulated by Przemieniecki \cite{przemieniecki1985theory}, and clearly there exists symmetry and asymmetry for the two shapes. Moreover we notice that the effect of the fuselage mass does have an appreciable albeit small effect on the symmetric modes, and no visible effect on the asymmetric modes, which agrees with Przemieniecki's analysis, and general intuition. This is because the removal of the first row and columns of the elemental beam matrix results also removes the $R$ variable. Notice also that in both cases (symmetric and asymmetric) there is an unconstrained mode, which mathematically exists due to the FE model having no fixed boundary conditions.

\begin{figure}[ht!]
\centering
\subfloat[The first two symmetric bending modes (modes 1 and 2).]{
\hspace{-1cm}
\includegraphics[width=0.55\linewidth]{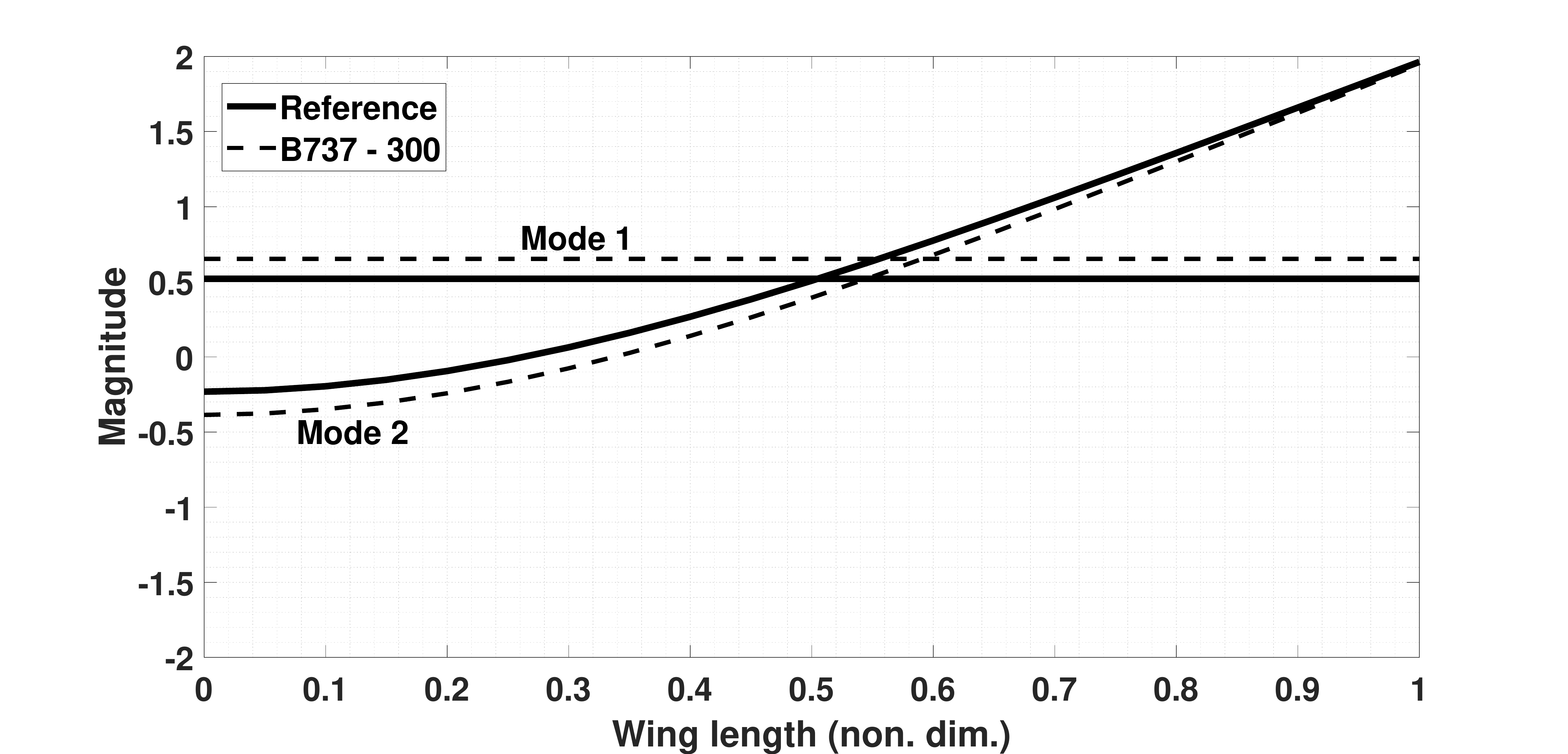}
  \label{fig:sym_bending_a}
}
\subfloat[The second two symmetric bending modes (modes 3 and 4).]{
\hspace{-0.8cm}
      \includegraphics[width=0.55\linewidth]{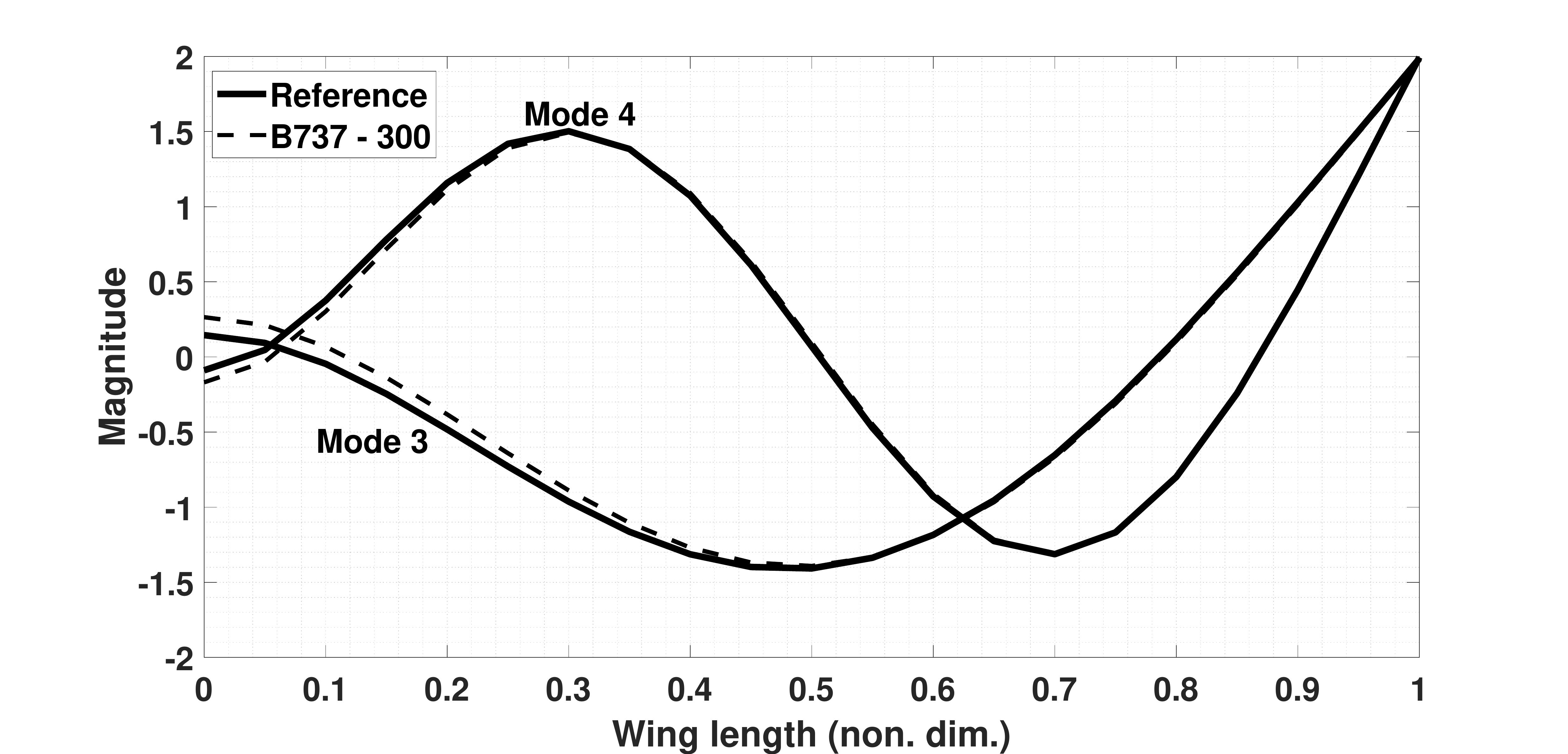}
    \label{fig:sym_bending_b}
}
  \caption{Symmetric bending modes for the 1D B737 FE model.}
  \label{fig:sym_bending}
\end{figure}

\begin{figure}[ht!]
\centering
\subfloat[The first two asymmetric bending modes (modes 1 and 2).]{
\hspace{-1cm}
  \label{fig:asym_bending_a}
\includegraphics[width=0.55\linewidth]{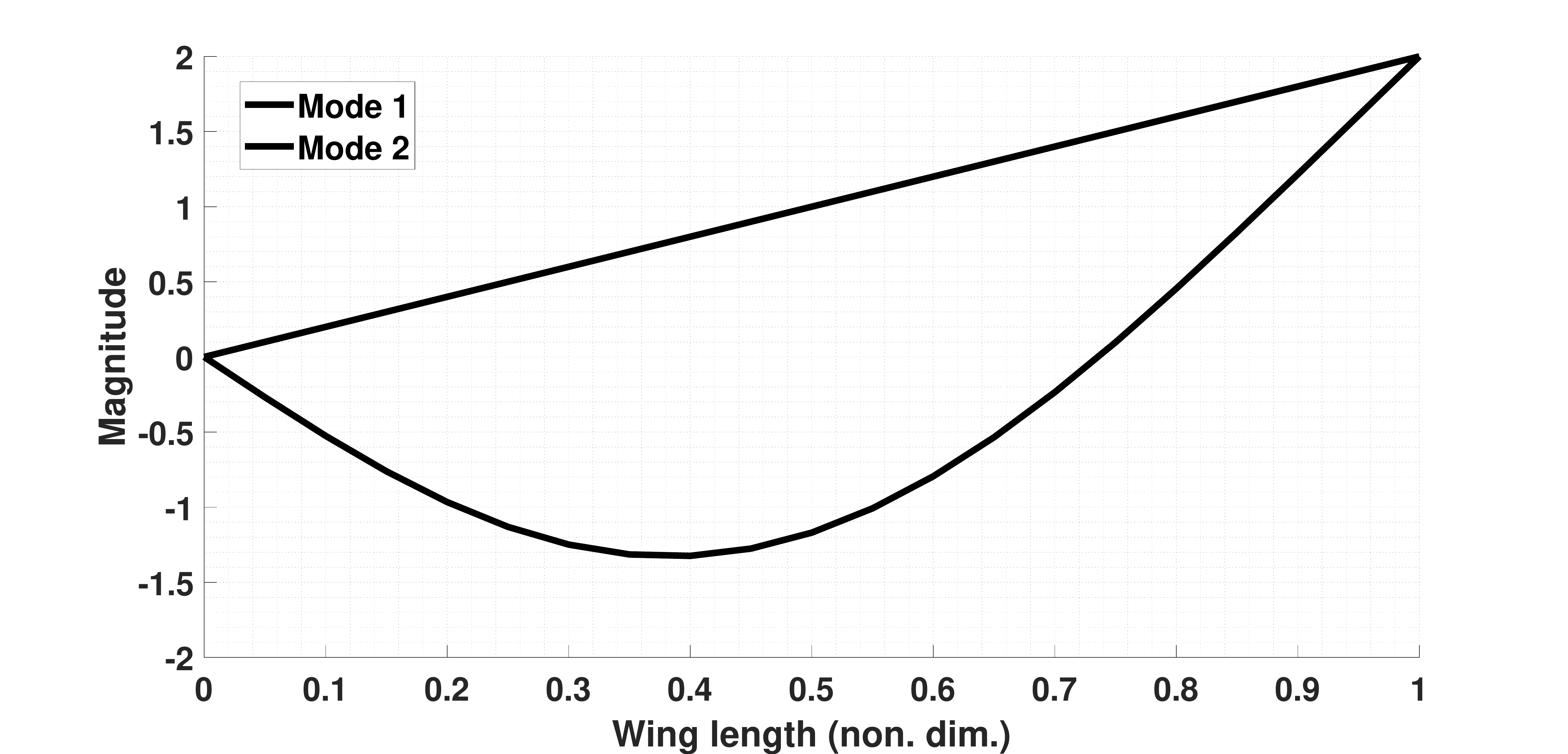}
}
\subfloat[The second two asymmetric bending modes (modes 3 and 4).]{
\hspace{-0.8cm}
      \includegraphics[width=0.55\linewidth]{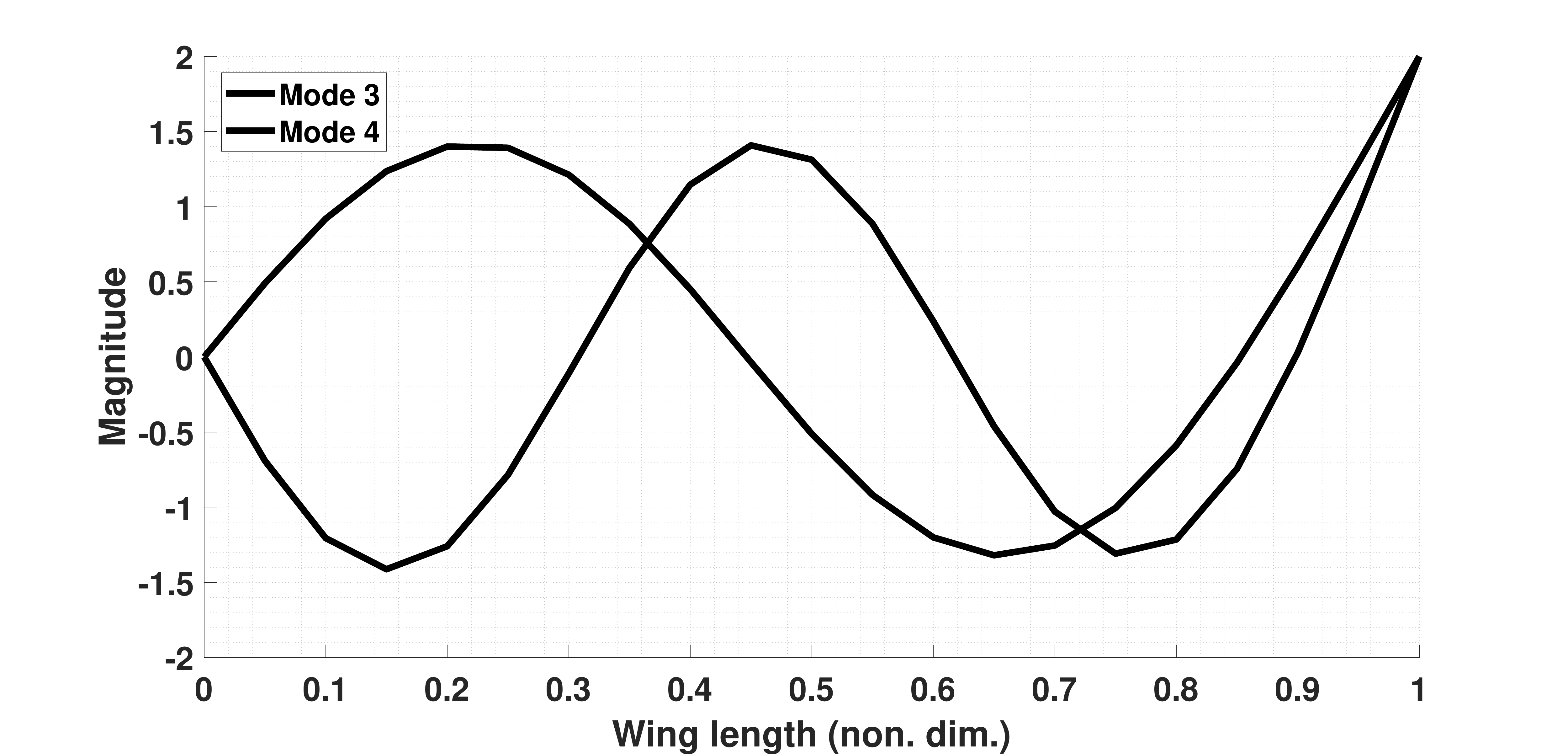}
  \label{fig:asym_bending_b}
}
  \caption{Asymmetric bending modes for the 1D B737 FE model.}
  \label{fig:asym_bending}
\end{figure}

In order to simplify analysis only the symmetric bending mode cases are considered, since as Figure \ref{fig:sym_bending_a} demonstrates, it takes into account the effect of the B737 model through the $R$ variable. As before with the toy example, without loss of generality we aim to formulate the $\bm{\Delta}$ matrices as upper triangular matrices, and perform the random embedding on these upper triangular matrices. However, different to the former case is that due to the reformulation of the problem as an FE model, it is now possible to arbitrarily grow the dimensionality of the problem by increasing the amount of elements of the FE model so that the optimisation problem can grow arbitrarily large, allowing for a more rigorous analysis of the potential usefulness and effects of random projections. 

The first three non-dimensional symmetric-mode frequency values of the system under investigation are given in Table \ref{tab:nondim_freq}. The frequency values given are non-dimensionalised according to $\lambda^2\sqrt{M_w L/(EI)}$, and $R=0$ refers to the base-line reference (if the aircraft purely consisted of beam elements and no fuselage mass), whereas $R=1.35$ refers to the B737 aircraft parameters, which are the values we will use in the optimisation procedure. As was mentioned earlier, the existence of the fuselage mass does indeed alter the eigenvalues in an appreciable manner. For the optimisation problem, we aim to alter the first three non-dimensional frequencies of the symmetric bending mode to become: $\omega = \begin{bmatrix} 2, 7, 22 \end{bmatrix}$. That is we would like the following mapping to occur between the eigenvalues, $\begin{bmatrix} 0,4.09,23.36 \end{bmatrix} \xmapsto{\bm{\Delta}} \begin{bmatrix} 2, 7, 22 \end{bmatrix}$. This is why only the first three eigenvalues are shown in Table \ref{tab:nondim_freq}. Note however any number of eigenvalues may be used, and that from a physical point of view, it does not necessarily make sense to be transforming the first eigenvalue from `0' to `2' since this changes the constraints of the system (as `0' represents rigid body motion). However, the emphasis of this paper is to explore PSO as applied to inverse structural eigenvalue problems in parallel with random projections, and so the objective functions were chosen arbitrarily. 

\begin{table}[H]
\centering
\begin{tabular}{ccc}
\hline
\textbf{Frequency Number} & \textbf{R = 0} & \textbf{R = 1.35} \\ \hline
1 & 0 & 0 \\
2 & 5.59 & 4.09 \\
3 & 30.23 & 23.36 \\ \hline
\end{tabular}
\caption{First three non-dimensional symmetric-mode frequency values, of the FE model aircraft, non-dimensionalised by $\lambda^2\sqrt{M_w L/(EI)}$, where $R=M_F/M_w$ is the fuselage-to-wing mass ratio.}
\label{tab:nondim_freq}
\end{table}

 The convergence behaviour for this optimisation problem is shown in Figure \ref{fig:conv_FE_Model}. In all three cases we note extremely similar behaviour as compared to the toy example. That is, convergent behaviour in the lower dimensional space is initially and consisitently much faster in the sense that (faster in the sense that with less iterations, the random embedding method tends to have a much lower objective function magnitude). Eventually however, the full dimensional space does \textit{tend} to approach similar values to the random embedding but this is to be expected, since the full system always perfectly describes the problem, and the problem at hand also does seem to possess a very low effective dimensionality, implying that the optimisation procedure may not need to actively explore all possible dimensions. That is, although the full dimensional space seems high, the particle swarm doesn't need to explore it fully to obtain a good solution. Regardless of the conjectured advantages of this particular problem, the random projection assists in the notion of \textit{faster} convergence behaviuor across all areas. 

\begin{figure}[ht!]
\centering
\subfloat[Convergence of a 5 element discretisation resulting in 132 free paramters.]{
\hspace{-1cm}
\includegraphics[width=0.55\linewidth]{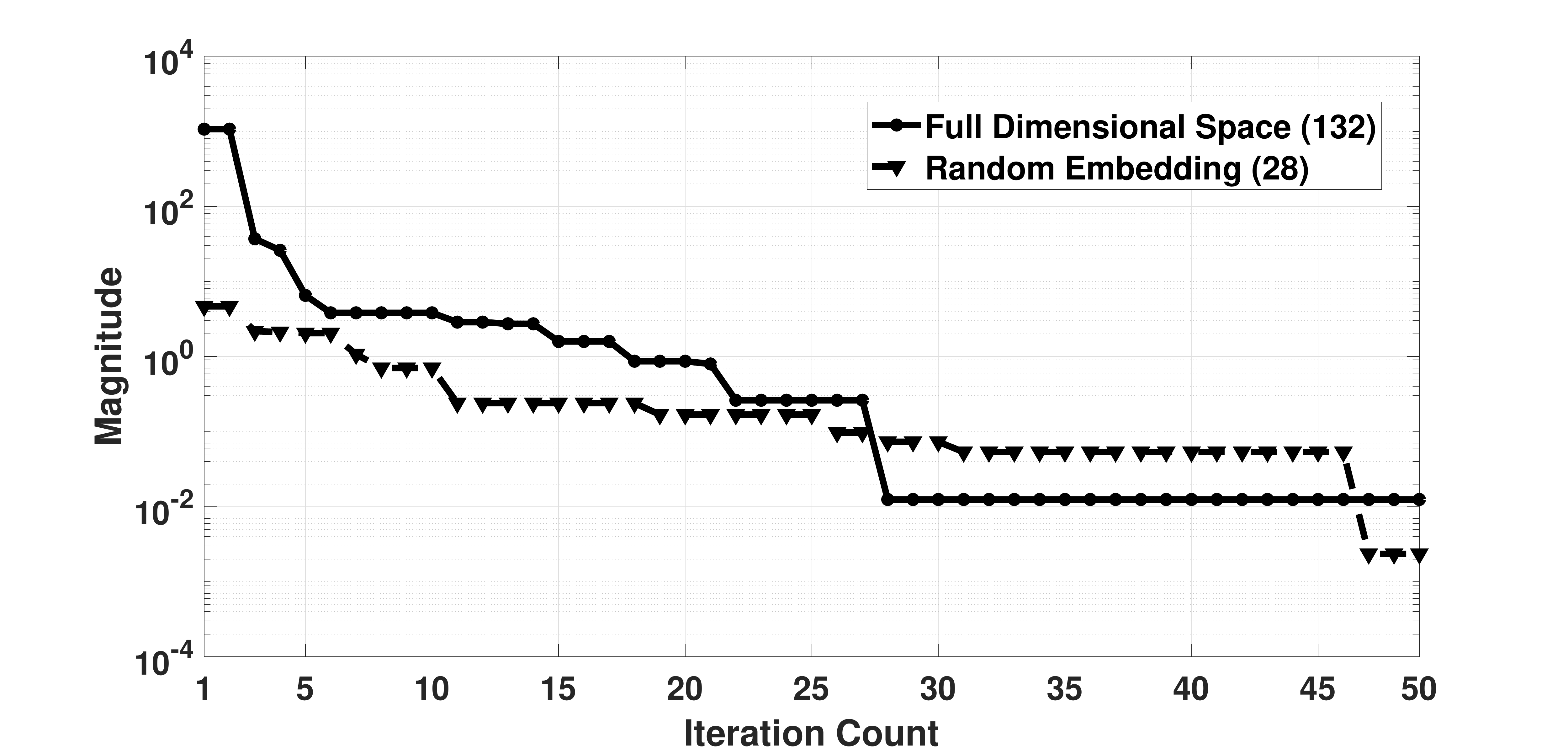}
}
\subfloat[Convergence of a 10 element discretisation resulting in 462 free paramters.]{
\hspace{-0.8cm}
      \includegraphics[width=0.55\linewidth]{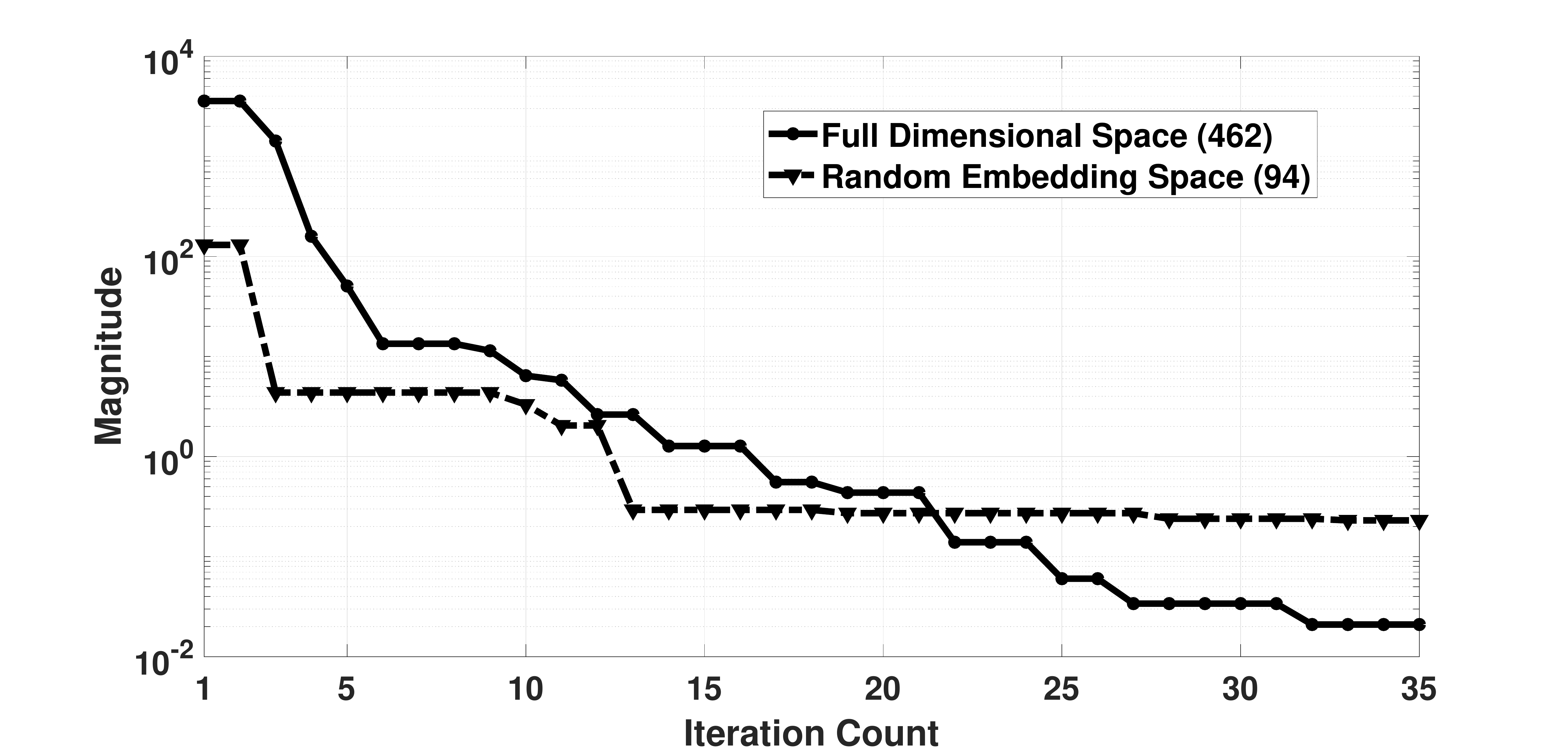}
}\\
\subfloat[Convergence of a 35 element discretisation resulting in 5112 free paramters.]{
\hspace{-0.8cm}
      \includegraphics[width=0.60\linewidth]{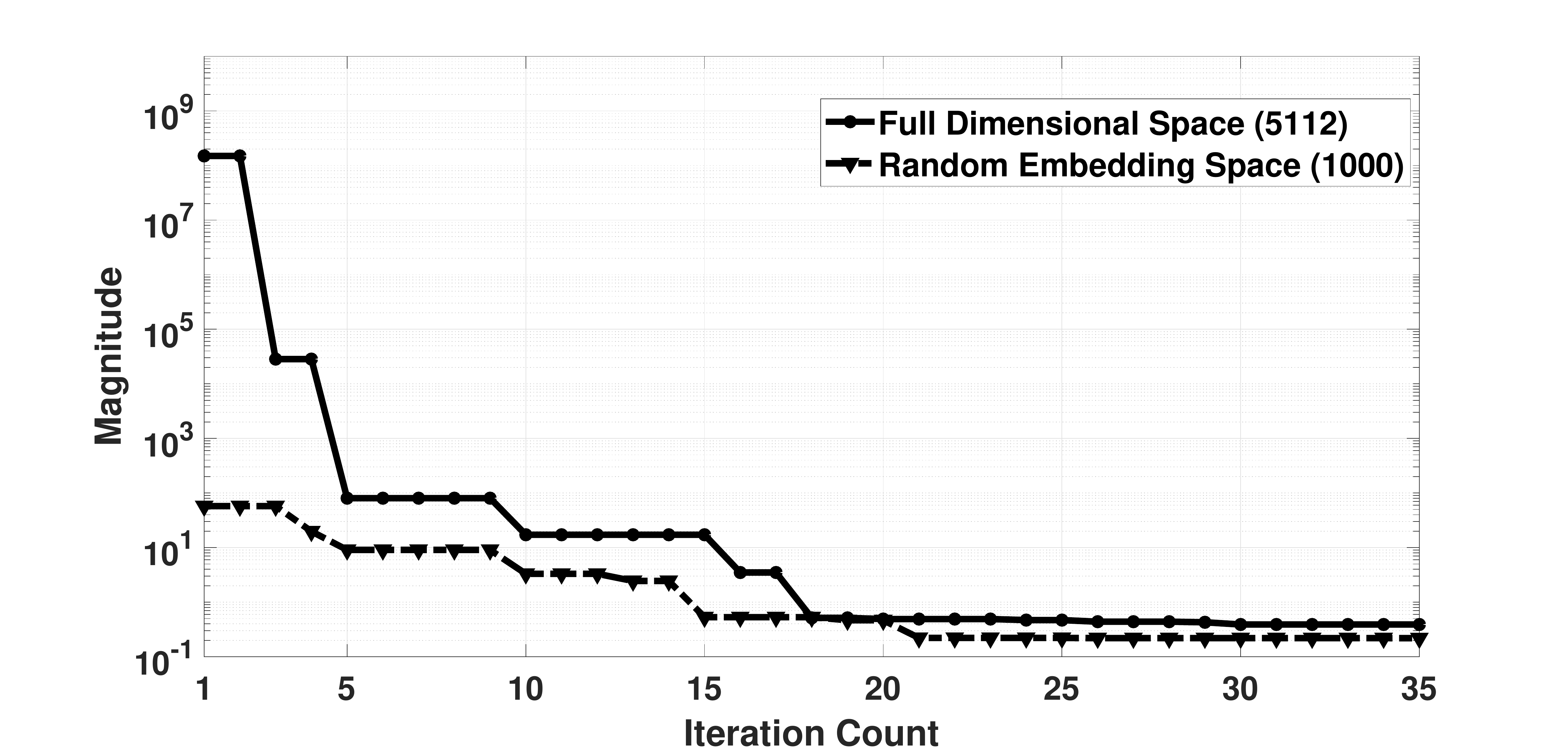}
}
  \caption{Convergence behaviour of the optimisation problem comparing the full dimensional, and dimensionally reduced spaces. In all cases a population of 500 particles were used, and a standard dimensional reduction of 80\% was used for consistency.}
  \label{fig:conv_FE_Model}
\end{figure}

In order to explore the capabilities of random embedding even further, it was applied to a case of a 100 element model for the aircraft. This resulted in a large search space of 40602 free parameters to explore for optimisation. It was proposed to reduce the dimensionality of the problem by 99.3\% resulting in a random embedding space of only 300 free parameters. In addition to this, the total amount of particles used in the swarm was reduced by a factor 2 (from 500 to 250). The result of this is shown in Figure \ref{fig:conv_FE_massive}.

\begin{figure}[ht!]
\centering
  \includegraphics[width=0.70\linewidth]{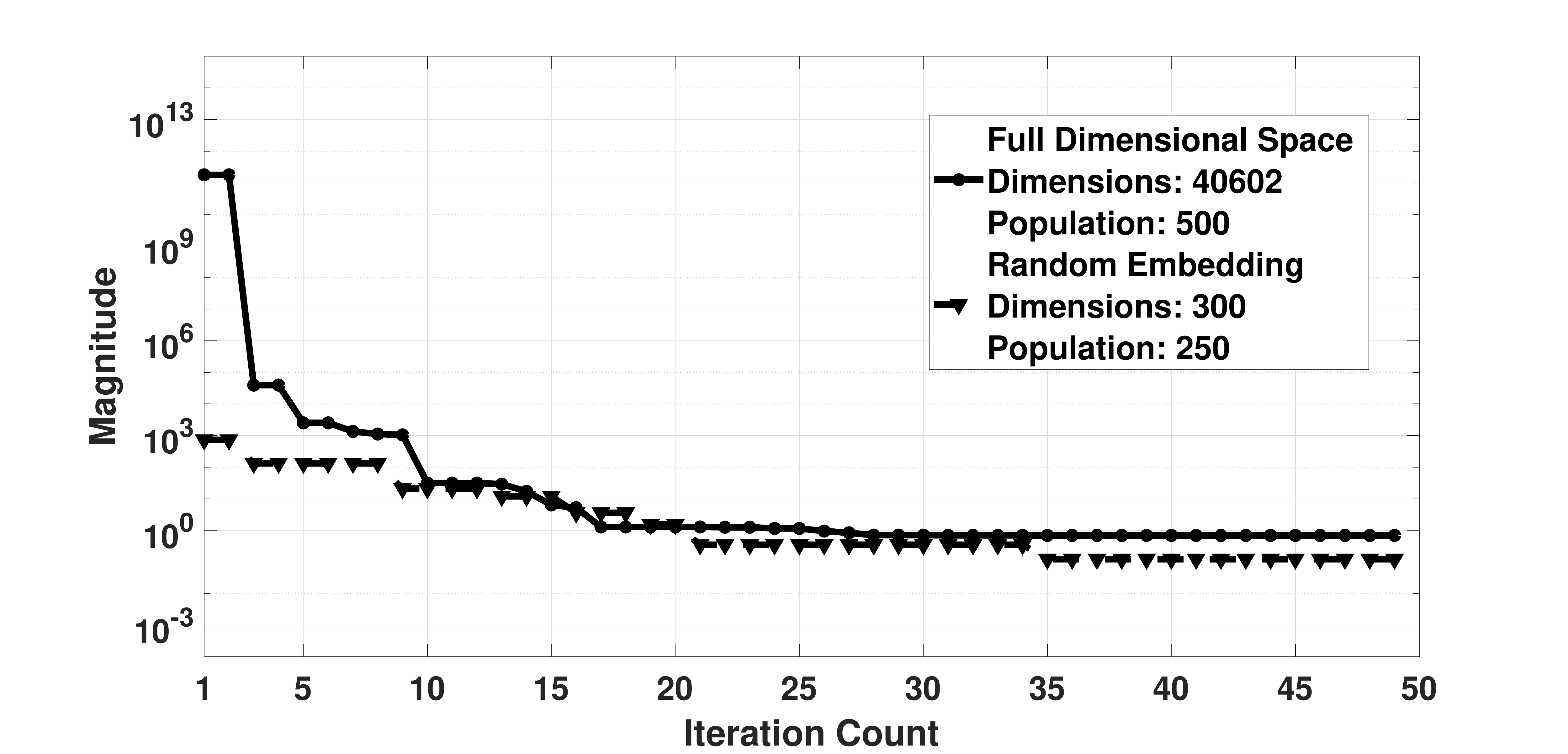}
  \caption{An extreme example showcasing the convergence of a 100 element discretisation resulting in 40602 free paramters. A dimension reduction of 99.3\% was used.}
  \label{fig:conv_FE_massive}
\end{figure}

From Figure \ref{fig:conv_FE_massive} once again extremely similar behaviour can be observed as in the previous problems. That is, the lower dimensional space is able to achieve much lower objective function magnitudes, a lot more rapidly. Moreover in this example, it was shown to be able to do this not only in less iterations, but also with less overall particles. In addition, the overall converged solution of this lower dimensional space is much better than the full dimensional solution which simply found it very difficult on average to converge to good values due to the enormous search space. The full dimensional solution could only converge on the order of $10^0$ on average, whereas the reduced dimension solution is able to converge to a value on the order of $10^{-2}$ on average. 

Thus as this paper has consistently demonstrated, through the use of random embedding we are able to significantly increase the speed and quality of convergence of a PSO optimiser, in terms of using less overall particles, coupled with less total iterations, ultimately leading to greater overall computational efficiency, in less total time. Note in this case we say `a' solution since the inverse eigenvalue problem with incomplete modal information is in a well known ill-conditioned problem and there does exist many locally optimal solutions. However the main purpose of this paper was not to explore the ability to achieve the global optima (which for the inverse eigenvalue problem, may not necessarily even be the best solution depending on context), but to analyse the applicability of using random projections alongside PSO in order to study the efficiency of an underlying optimisation procedure in the field of structural engineering.

A further point of discussion for the extremely high dimensional problem explored in Figure \ref{fig:conv_FE_massive} is to consider the level of distortion that has occured to the original surface when projecting down to a surface which has 99.3\% less overall dimensions. Table \ref{tab:JL_FE_massive} summarises the relationship for $n=40602$ for the JL Lemma. Here we note that for a 70\% average discrepancy between the pairwise Euclidean distancs of the points in the new lower dimensional space, 325 dimensions are \textit{sufficient}, which is comparable to what was used in Figure \ref{fig:conv_FE_massive}. Thus the geometry between adjacent points in this new subspace are most likely significantly different to what was in the original high dimensional space. Regardless however, it was nevertheless possible to converge to extremely good values suggesting that even though there may be a large geometric distortion, $d_e \leq 300$. And thus by Theorem 4.1 there exists a solution (or several), which we are able to find due to the strength of the black-box PSO algorithm. Note however that if we did not opt to reduce the dimensions by 99.3\%, but by a factor of $\approx 80\%$ we could still optimise with 9096 dimensions and achieve no more than 10\% error in the pairwise Euclidean distances between points. However the surface distortion issue does not appear to be a huge problem given that the problem has a low underlying effective dimension, and a good optimiser is used (as is the case of PSO). 

\begin{table}[h]
\centering
\begin{tabular}{cccccc}
\hline
Distortion Error (\%) & \textbf{10} & \textbf{30} & \textbf{50} & \textbf{70} & \textbf{100} \\ \hline
Dimension & 9096 & 1180 & 510 & 325 & 255 \\ \hline
\end{tabular}
\caption{How the distortion error effects the corresponding dimension of the mapped subspace, for $n=40602$ in accordance with the JL Lemma.}
\label{tab:JL_FE_massive}
\end{table}

Lastly it is important for the reader to note that the solutions obtained in this paper will be nonphysical. This is because the $\bm{\Delta}$ matrices are assumed to be full rank, upper-triangular matrices, without physical constraints applied to them (apart from symmetry being enforced via the upper-triangular nature of $\bm{\Delta}$). In order to enforce complete physicality of the solution it would be necessary to place constraints in the search space (either through equality and or inequality constraints). This idea has been explored partly by previous authors \cite{sivan1996mass,olsson2007inverse}, but it remains an open question in the case of truncated modal systems. Nevertheless, although it would be trivial to place constraints on the systems explored in this paper, it reamins that the purpose of this paper is to explore the viability of dimensionality reduction for structural vibration problems, of which the results appear to be extremely promising. The placement of constraints would not allow the justified exploration of spaces as high approximately $40000$ in the case of 1D FE model structures.

\section{Conclusion}
Random projection is a popular technique used to reduce the dimensionality of a problem. It has been demonstrated in this paper that by using random projections we were able to successfully perform optimisation in this lower dimensional space which resulted in much faster overall convergence, faster in the sense that on average less iterations were required to achieve much better results. This was demonstrated on an example 10-dimensional toy problem, as well as on a 1-D FE model of a Boeing 737-300 aircraft. Moreover the existence of a moderately small effective dimension was predicted to exist for generalised inverse eigenvalue problems which have Hermitian matrices. Moreover it was demonstrated experimentally that gradient-based approaches for performing optimisation for eigenvalue problems may necessitate prohibitively small step sizes, which tends to suggest that non-gradient, black-box optimisation methods may be preffered for these types of problems.

\pagebreak
\section*{\bibname}
\bibliographystyle{unsrt}
\bibliography{4241_che}

\pagebreak
\appendix

\section{Boeing 737-300 Data and Formulation}

An overview of the data used in modeling the 1D FE model B737 structure is presented in this section, as well as a the technical diagram used to extract some of its lengths. 

\begin{table}[H]
\centering
\begin{tabular}{ccc}
\hline
\textbf{Parameters}   & \textbf{Value}       & \textbf{Units}      \\ \hline
Cruise Velocity       & 725.43               & ft/s                \\
Cruise Altitude       & 30000                & ft                  \\
Air Density at Cruise & 8.91$\times 10^{-4}$ & slugs/$\text{ft}^3$ \\
Dynamic Pressure      & 234.44               & lb/$\text{ft}^2$    \\
Ultimate Load Factor  & 5.7                  & -                   \\
Design Gross Weight   & 109269.60            & lb                  \\ \hline
\end{tabular}
\caption{General Flight Parameters, available from \textit{Jane's all the World's Aircraft} \cite{taylor1976jane}.}
\label{tab:B737_general}
\end{table}

\begin{table}[H]
\centering
\begin{tabular}{ccc}
\hline
\textbf{Parameters}     & \textbf{Value} & \textbf{Units} \\ \hline
Length                  & 105.94         & ft             \\
Depth                   & 12.33          & ft             \\
Wet Area                & 4104.80        & $\text{ft}^2$  \\
Tail Length             & 15.89          & ft             \\
Cabin $\Delta$ Pressure & 8.00           & Pa             \\ \hline
\end{tabular}
\caption{Fuselage Parameters, available from \textit{Jane's all the World's Aircraft} \cite{taylor1976jane}.}
\label{tab:B737_fuselage}
\end{table}

\begin{table}[H]
\centering
\begin{tabular}{ccc}
\hline
\textbf{Parameters}      & \textbf{Value} & \textbf{Units} \\ \hline
Wet Area                 & 1133.90        & $\text{ft}^2$  \\
Weight of Fuel in Wing   & 35640.00       & lb             \\
Aspect Ratio             & 9.16           & -              \\
Wing Sweep at $25\%$ MAC & 25.00          & degrees        \\
Thickness-to-Chord ratio & 8.00           & -              \\ \hline
\end{tabular}
\caption{Wing Parameters, available from \textit{Jane's all the World's Aircraft} \cite{taylor1976jane}.}
\label{tab:B737_wing}
\end{table}

The equations used for estimating the fuselage and wing masses are available from Roskam \cite{raymer1999aircraft}, in particular the Equations used below refer to Equations (15.46), and (15.49) in Roskam. All terms of the below equatinos are are defined in this reference. 
\begin{align}
    &W_{\text{wing}} = 0.036S_w^{0.758}W_{fw}^{0.0035}\left(\frac{A}{\cos^2\Lambda}\right)^{0.6}q^{0.006}\lambda^{0.04}\left( \frac{100 t/c}{\cos\Lambda} \right)^{-0.3} (N_z W_{dg})^{0.49} \\
    &W_{\text{fuselage}}=0.052S_f^{1.086}(N_z W_{dg})^{0.177} L_t^{-0.051} (L/D)^{-0.072} q^{0.241} + W_{press} \\
    &W_{press} = 11.9 + (V_{pr}P_{\delta})
\end{align}

\newpage
\begin{figure}[ht!]
\centering
\subfloat[Top view of a Boeing 737-300]{
\includegraphics[width=0.60\linewidth]{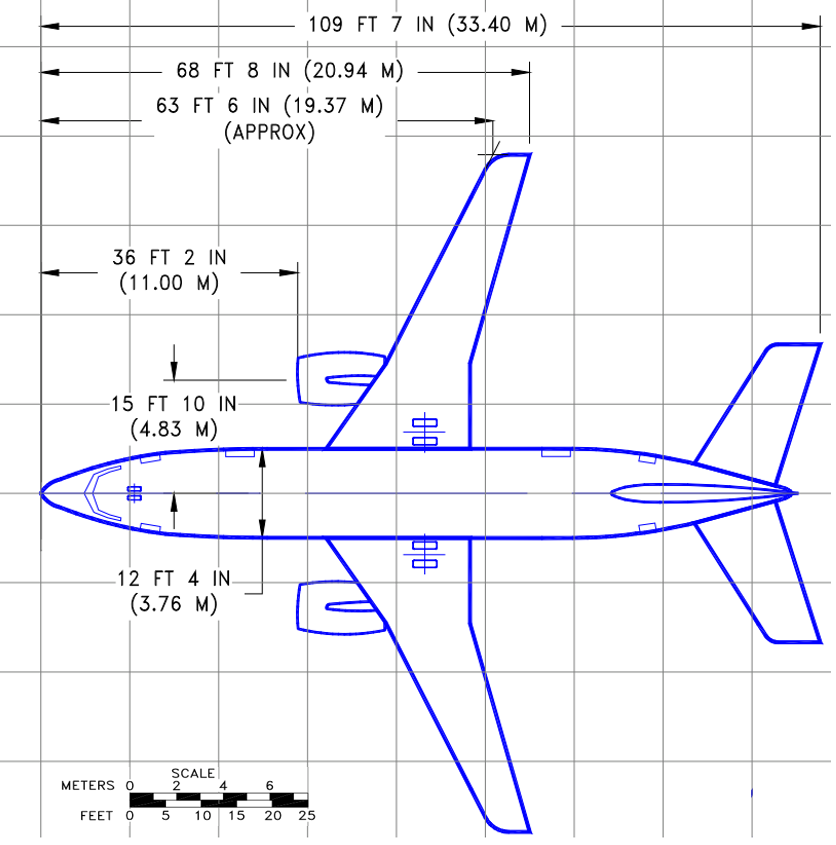}
}
\\
\subfloat[Side and front views of a Boeing 737-300]{
      \includegraphics[width=0.60\linewidth]{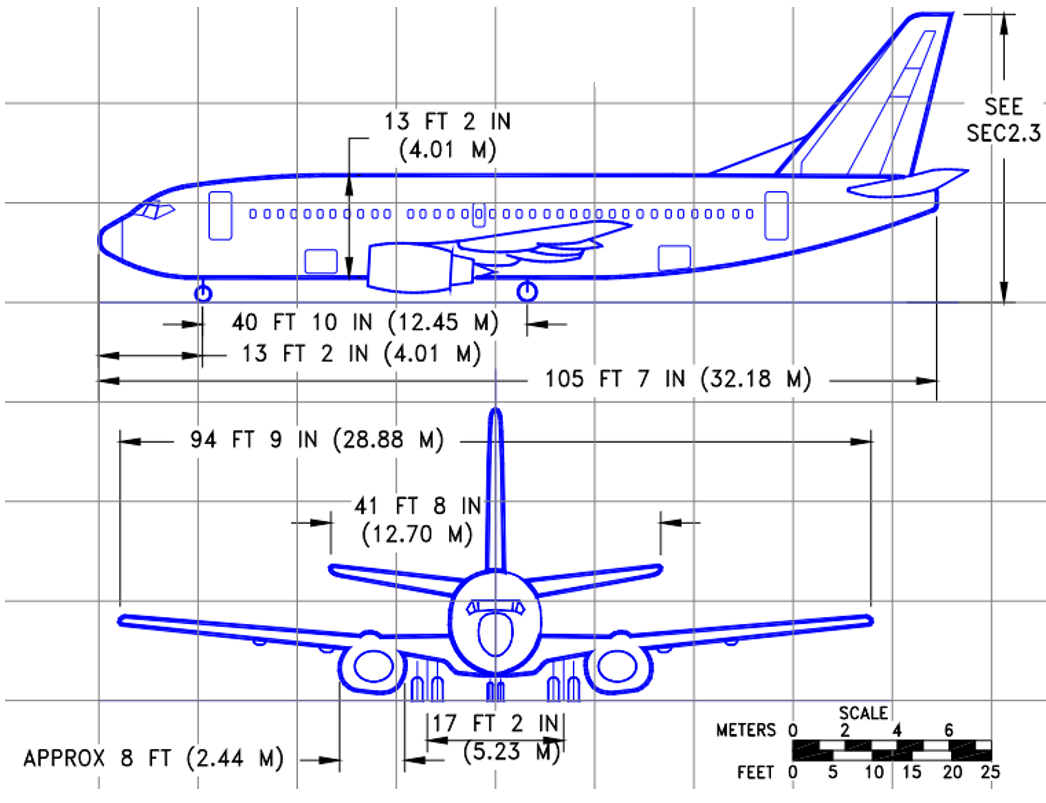}
}
  
  \caption{Design views of the Boeing 737 aircraft used to estimate certain lengths in \cref{tab:B737_general,tab:B737_fuselage,tab:B737_wing} \cite{B737Tech}. }
  \label{fig:B737_diagram}
\end{figure}

\end{document}